\documentclass{article}

\usepackage[final]{neurips_2020}

\usepackage[utf8]{inputenc} %
\usepackage[T1]{fontenc}    %
\usepackage{url}            %
\usepackage{booktabs}       %
\usepackage{amsfonts}       %
\usepackage{nicefrac}       %
\usepackage{microtype}      %

\usepackage{natbib}
\setcitestyle{round}
\usepackage[pdftex]{graphicx}
\usepackage{booktabs} %

\usepackage{amsmath,amsfonts,bm}

\def\secref#1{section~\ref{#1}}

\def\eqref#1{equation~\ref{#1}}

\def\1{\bm{1}}

\def\vmu{{\bm{\mu}}}

\def\mSigma{{\bm{\Sigma}}}

\DeclareMathAlphabet{\mathsfit}{\encodingdefault}{\sfdefault}{m}{sl}
\SetMathAlphabet{\mathsfit}{bold}{\encodingdefault}{\sfdefault}{bx}{n}

\newcommand{\qryx}{x^\text{q}}  %
\newcommand{\ctrx}{x^\text{c}}  %
\newcommand{\Hypotheses}{\mathcal{H}}
\newcommand{\hypotheses}{H}

\newcommand{\hstar}{\hypothesis^\star}

\newcommand{\hypothesis}[0]{\ensuremath{h}}
\newcommand{\Examples}{\mathcal{X}}

\newcommand{\TD}{\texttt{OPT}^{\texttt{T}}}
\newcommand{\TDAL}{\texttt{OPT}^{\texttt{T+AL}}}

\newcommand{\OPTAL}{\texttt{OPT}^{\texttt{AL}}}

\newcommand{\bigO}{\mathcal{O}}
\newcommand{\logH}{\log|\Hypotheses|}

\DeclareMathOperator*{\argmax}{arg\,max}

\usepackage{slashbox}
\usepackage{framed}
\usepackage{algorithm}
\usepackage{algorithmicx}
\usepackage[noend]{algpseudocode}
\usepackage{wrapfig}
\usepackage{lipsum}
\usepackage{url}
\usepackage{caption}
\usepackage{subcaption}
\usepackage{comment}
\usepackage{graphicx}
\usepackage{xcolor}
\usepackage[font=small,labelfont=bf]{caption}
\usepackage{booktabs}       %
\usepackage{tabularx}
\usepackage{mathtools}
\usepackage{multirow}
\usepackage{xcolor}
\usepackage[colorlinks=true]{hyperref}
\usepackage{bm}
\usepackage{amssymb} 
\usepackage{amsthm}
\usepackage{dsfont}
\usepackage[page]{appendix} %

\newtheorem{lemma}{Lemma}
\newtheorem{theorem}{Theorem}
\newtheorem{remark}{Remark}

\newtheorem{definition}{Definition}

\usepackage{chngcntr}
\usepackage{enumitem}
\setcitestyle{square}

\usepackage{algorithm}
\makeatletter

\definecolor{alizarin}{RGB}{227,38,54}
\definecolor{ultramarine}{RGB}{24,13,191}

\hypersetup{
  linkcolor  = alizarin,
  citecolor  = ultramarine,
  urlcolor   = ultramarine,
  colorlinks = true
}

\newcommand{\email}[1]{\tt\small\href{mailto:#1}{#1}}

\usepackage{scalerel,stackengine}
\stackMath
\newcommand\reallywidehat[1]{%
\savestack{\tmpbox}{\stretchto{%
  \scaleto{%
    \scalerel*[\widthof{\ensuremath{#1}}]{\kern.1pt\mathchar"0362\kern.1pt}%
    {\rule{0ex}{\textheight}}%
  }{\textheight}%
}{2.4ex}}%
\stackon[-6.9pt]{#1}{\tmpbox}%
}

\newcommand{\denselist}{
}

\usepackage{tikz}

\title{Teaching an Active Learner with \\ Contrastive Examples}
 
\author{%
 Chaoqi Wang \\
  University of Chicago \\
  \texttt{\email{chaoqi@uchicago.edu}} \\
  \And
  Adish Singla \\
  MPI-SWS \\
  \texttt{\email{adishs@mpi-sws.org}} \\
  \And
  Yuxin Chen \\
  University of Chicago \\
  \texttt{\email{chenyuxin@uchicago.edu}} \\
}

\begin{document}

\maketitle

\begin{abstract}
 We study the problem of active learning with the added twist that the learner is assisted by a helpful teacher. We consider the following natural interaction protocol: At each round, the learner proposes a query asking for the label of an instance $x^q$, the teacher provides the requested label $\{x^q, y^q\}$ along with explanatory information to guide the learning process. In this paper, we view this information in the form of an additional \emph{contrastive example}~($\{x^c, y^c\}$) where $x^c$ is picked from a set constrained by $x^q$ (e.g., dissimilar instances with the same label). Our focus is to design a teaching algorithm that can provide an informative sequence of contrastive examples to the learner to speed up the learning process. We show that this leads to a challenging sequence optimization problem where the algorithm's choices at a given round depend on the history of interactions. We investigate an efficient teaching algorithm that adaptively picks these contrastive examples. We derive strong performance guarantees for our algorithm based on two problem-dependent parameters and further show that for specific types of active learners (e.g., a generalized binary search learner), the proposed teaching algorithm exhibits strong approximation guarantees. Finally, we illustrate our bounds and demonstrate the effectiveness of our teaching framework via two numerical case studies.
\end{abstract}

\section{Introduction}
Active learning characterizes a natural learning paradigm where a learner \emph{actively engages} in the learning process, and has demonstrated great success in both the educational domain %
\citep{prince2004does} and the machine learning literature %
\citep{settles2012active}. In the context of machine learning, active learning has traditionally been studied as the \emph{active instance labeling} problem, %
where the goal of an active learner is to learn a concept by selectively querying the labels of a sequence of data points \citep{kosaraju1999optimal, dasgupta2005analysis, nowak2008generalized,Balcan:2009:AAL,gonen2013efficient,hanneke14minimax,yan2017revisiting}. 
Recently, active learning has been investigated under richer interaction protocols, including %
learning from feature feedback \citep{poulis2017learning,dasgupta2020robust} or from demonstrations \citep{chernova2014robot,silver2012active}. Typically, these works assume that the rich feedback obtained from the teacher is adversarial \citep{dasgupta2020robust} or stochastic \citep{poulis2017learning}. %
While these results have shown promising improvements in sample complexity by leveraging more expressive feedback, the assumptions of an adversarial or stochastic teacher could fall short in characterizing many real-world interactive learning systems (e.g. automated tutoring or multi-agent learning systems) where the teacher cooperates with the learner to achieve the learning objective \citep{zilles2011models,DBLP:journals/corr/ZhuSingla18}.

\begin{figure}[t]
\centering
    \includegraphics[width=1.0\textwidth]{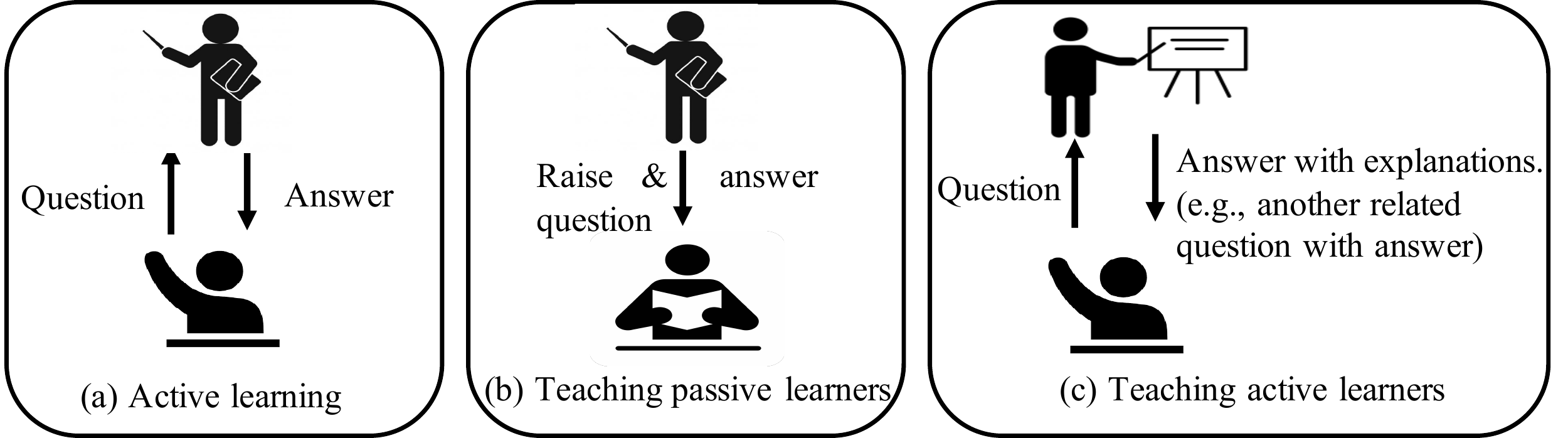}
  \caption{The difference between (a)~active learning, (b)~classic machine teaching and (c)~our problem. }\label{fig:fig1}
  \vspace{-0.2cm} 
\end{figure}

Motivated by real-world cooperative learning and teaching scenarios \citep{chen18explain,teso2019explanatory}, we consider an optimistic yet natural interaction protocol between an active learner and a \emph{helpful} teacher~(see Figure~\ref{fig:fig1}~(c)): At each round, the learner proposes a query asking for the label of an instance $x^q$; the teacher then provides the requested label $\{x^q, y^q\}$ along with explanatory information to guide the learning process. In this paper, we consider this information in the form of an additional \emph{contrastive example} ($\{x^c, y^c\}$) where $x^c$ is picked from a set constrained by $x^q$ (e.g., dissimilar instances with the same label, or similar instances with opposite labels).
In contrast to existing work in active learning with contrastive examples \citep{abe2006outlier,liang2020alice,ma2021active} or with comparison queries \citep{jamieson2011active,kane2017active}  which focus on developing \textit{active learning algorithms}, we assume that the learner's querying and learning strategy is \textit{fixed} and \textit{known} to the teacher, and focus on designing a \emph{teaching algorithm} that can provide an informative sequence of contrastive examples to the learner to speed up the learning process. 

Note that in the absence of the learner's query, this problem reduces to the classical \textit{machine teaching} problem \citep{goldman1995complexity}.  %
When teaching a (passive) version space learner \citep{goldman1995complexity}, classical machine teaching can be viewed as a set cover problem, and the greedy algorithm %
is known to compute a cover that is within a logarithmic factor of the minimum cover \citep{chvatal1979greedy}.
However, as we show in \secref{sec:general_bounds}, %
an active learner actively trying to reduce the label cost could actually \textit{hurt} the performance of a greedy teacher. %

Given an active version space learner specified by an acquisition function, we show that the design of an optimal teaching algorithm induces a challenging sequence optimization problem, where the teacher's available choices of constrastive examples at a given round depend on the history of interactions. %
We study the greedy teacher that adaptively picks contrastive examples in response to the learner's queries. Specifically, we introduce several natural problem-dependent parameters to characterize (1) the constraints imposed by the active learner and (2) the structure of the coverage function resulting from the hypothesis space. Based on these characterizations, we explore the theoretical properties for the greedy teacher, and demonstrate the power and limitations of the interactive teaching framework. Our main technical contributions are summarized are:
\begin{enumerate}[label={\Roman*.},leftmargin=*]\denselist
    \item We consider a novel interactive protocol for teaching an active learner, and derive a general performance guarantee for the greedy teacher, regardless of the choice of acquisition function by the active learner. Our result generalizes existing near-optimality guarantees for the greedy teacher under the classical teaching \citep{goldman1995complexity} to the more challenging sequence optimization problem (\secref{sec:analysis-general}).
    \item We then show that, perhaps surprisingly, a greedy teacher may fail badly when interacting with an active learner compared to the optimal teacher, due to the constrained choices of contrastive examples imposed by the learner's query (\secref{sec:analysis-general}). 
    In fact, there exist pessimistic scenarios where an active learner alone is better off without the help of a greedy teacher (\secref{sec:analysis-gbs}). 
    \item When interacting with a generalized binary search (GBS) learner \citep{dasgupta2005analysis,nowak2008generalized}, we provide a strong performance guarantee for the greedy teacher, and show that it cannot perform arbitrarily badly (\secref{sec:analysis-gbs}). Our bound establishes a rigorous connection between the problem-dependent parameters introduced in \secref{sec:analysis-general} to the classical \textit{$k$-neighborly condition} and \textit{coherence parameter} which are widely used in the active learning literature \citep{nowak2008generalized,nowak2011geometry,pmlr-v84-mussmann18a}.
    \item We empirically illustrate our bounds via two experiments. These examples illustrate the intuitive sequential teaching process and demonstrate the effectiveness of our teaching framework.
\end{enumerate}

\subsection{Related Work}
\paragraph{Teaching a sequential learner} Machine teaching has shown significant development in the past two decades in both theoretical and practical domains. In a quest to achieve rich teacher-learner interactions, various different models for machine teaching have been proposed under the sequential teaching setting (e.g., local preference-based model for version space learners~\citep{chen2018understanding}, models for gradient learners~\citep{liu2017iterative,DBLP:conf/icml/LiuDLLRS18,DBLP:conf/ijcai/KamalarubanDCS19}, models inspired by control theory~\citep{zhu2018optimal,DBLP:conf/aistats/LessardZ019}, models for sequential tasks~\citep{cakmak2012algorithmic,haug2018teaching,tschiatschek2019learner}, and models for human-centered applications that require adaptivity~\citep{singla2013actively,hunziker2019teaching}). Compared to our setting, in these works, the teacher is responsible for designing all of the learning data. Recently, \citet{NEURIPS2019_b98a3773} investigated the task of teaching sequential learners such as multi-armed bandits or active learners where the learners also pose query, but they formulate the problem as a variant of reward shaping, and allow the teacher to provide teaching examples that are inconsistent with the true data distribution. While our work falls into the category of teaching sequential learners, it could be viewed as a form of explanation-based machine teaching, and hence inherits different teaching constraints and objectives from the above settings.

\paragraph{Explanation-based teaching and learning}
Explanation-based teaching captures a rich class of teaching protocols where labels are coupled with additional information (such as highlighting regions or features on an image). Having an additional mode in the feedback allows the learner to perceive more intuitive explanations of the target hypothesis, and hence dramatically improves the learner's ability to learn a new concept. Explanations were shown to be effective in helping a human student to improve classification performance by steering the student’s attention~\citep{grant2003,roads2016,chen18explain,mac_aodha_teaching_2018}; likewise, they also play an important role in helping machine learners to efficiently learn some concept classes that would otherwise form intractable learning problems~\citep{poulis2017learning,dasgupta2020robust}. Our work is motivated by the success of explanation-based learning, and consider contrastive examples as a simple yet effective form of explanations. In comparison with existing work on explanation-based teaching, our work has an additional twist featured by the interaction with an active learner.

\paragraph{Submodular set and sequence optimization} Our theoretical framework in \secref{sec:analysis-general} is inspired by recent results on sequence submodular function maximization \citep{zhang2015string,tschiatschek2017selecting,hunziker2019teaching}.  In particular, \cite{zhang2015string} introduced
the notion of string submodular functions, which, analogous to the classical notion of submodular set functions~\citep{krause14survey}, enjoy similar performance guarantees for maximization of deterministic sequence functions. Our setting has three key differences in that (1) we focus on a constrained planning problem where the set of available actions is dynamic~(i.e. depending on the learner's query),~(2) our objective function no longer exhibits diminishing returns properties and~(3) we consider a min-cost coverage problem as opposed to the budgeted maximization problem. %

\section{Active Learning with a Teacher: Problem Formulation}
In this section, we formally introduce the new problem of active learning with a helpful teacher along with the corresponding interaction protocol. Then, we formulate the problem of teaching an active learner as a constrained sequential optimization problem. Throughout the entire paper, we use $\Examples$ to denote the ground set of unlabeled instances. We consider a finite class of hypothesis $\Hypotheses$, where each hypothesis $h\in \Hypotheses$ is a binary classifier with its output being either $+1$ or $-1$~(i.e., $h(x)\in\{+1, -1\}$).

\subsection{Learner-Teacher Interaction Protocol}
\renewcommand{\ALG@name}{Protocol}
\begin{algorithm}[t!]
  \caption{An interaction protocol between the teacher and the active learner}\label{alg:interaction}
    \begin{algorithmic}[1]
        \State An active learner with a query function $q$; Initial version space $\hypotheses_{0}^q = \Hypotheses$; Constraint function $\xi$ for the contrastive example; Target hypothesis $\hstar$.
        \For{$t=1, 2, 3, \ldots$}
        \State learner asks a query $\qryx_t=q(\ctrx_{1:t-1}, \qryx_{1:t-1}, \hypotheses^q_{t-1})$
        \State teacher provides the label $y_t^q$ for the query $\qryx_t$
        \State teacher provides a contrastive example $(\ctrx_t, y_t^c)$ from a constrained set $\xi(\qryx_t)$.
        \State learner computes the updated version space $\hypotheses^q_t$.
        \State \textbf{if \ } $\hypotheses^q_t = \{\hstar\}$ \textbf{then \ } teaching process terminates.
        \EndFor
	\end{algorithmic}
\end{algorithm}
\renewcommand{\ALG@name}{Algorithm}

We study the problem of teaching an active version space learner with a  teacher. 
In contrast to the standard active learning protocol, our framework not only allows the learner to ask questions but also enables the teacher to provide additional explanatory information to guide the learning process. Specifically, we model the explanatory information using a contrastive example picked from a set constrained by the learner's query~(e.g., dissimilar instances with the same label, or similar instances with opposite labels). The full interaction protocol is presented in Protocol~\ref{alg:interaction}. At each round $t$, the learner asks a query $\qryx_t=q(\ctrx_{1:t-1}, \qryx_{1:t-1}, \hypotheses^q_{t-1})$ with query function $q$ conditioning on the received contrastive examples~($\ctrx_{1:t-1}$) and version space $\hypotheses^q_{t-1}$. 
Then the teacher returns the ground-truth label $y^\text{q}_t$ for $\qryx_t$ and also provides a contrastive example $(\ctrx_t, y^\text{c}_t)$. 
Under this setting, the version space at iteration $t$ with teaching sequence $\ctrx_{1:t}=\{\ctrx_i\}_{i=1}^t$, can be computed by~(we omitted $y$ for clarity)
\begin{align}\label{eq:version-space-updates}
	\hypotheses^q_t=\hypotheses^q(\ctrx_{1:t}) = \Hypotheses \setminus \left(\bigcup_{i=1}^{t} S(\qryx_i) \cup S(\ctrx_i)\right),
\end{align}
where $\Hypotheses$ is the initial version space of the active learner and $S(x) = \{h\in\Hypotheses| h(x)\neq \hstar(x)\}$ denotes the hypotheses covered by $x$~(i.e., the hypotheses that predict different labels on $x$ than $\hstar$).

\subsection{Optimization Objective}
Our goal is to reduce the learner's version space such that only $\hstar$~(the target hypothesis) is left with minimal cost~(e.g., \# of interaction rounds or \#  of samples). The definition of cost can be problem-dependent. For the general case, we consider the following form of the cost function  
\begin{align}
    J^q(\ctrx_{1:t}) = \sum\nolimits_{i=1}^{t} c\left(\ctrx_i\right)  + c\left(\qryx_i\right),
\end{align}
where $\ctrx_{1:t}$ is the teaching sequence given by the teacher, and $c:\mathcal{X}\rightarrow \mathbb{R}_{+}$ computes the cost of the example. We don't write the dependencies on the learner's query sequence explicitly, as the queries are induced by the teaching sequence and the query function $q$ accordingly. 

Given the active learner, we seek the optimal teaching sequence that achieves minimal cost by solving the following optimization problem.
\begin{align}
	\min_{X\in \Examples^\star} J^q(X) \quad \quad s.t. \quad \quad \hypotheses^q(X) = \{h^\star\} \quad \text{and} \quad x_t \in \xi(\qryx_t)\;\;\forall t \in \left[|X|\right],
\end{align}
where we use $\Examples^\star$ to denote the set of all possible sequences of teaching examples, and $[n]$ denotes all the integers in $[1,n]$. At every iteration, we can compute the constraint set by the function $\xi: \Examples \rightarrow 2^\Examples$ given the learner's query. Then, the teacher must select the contrastive example from the constrained set. This is a natural consideration, since without the constraint, the teacher might have selected some contrastive examples that are not interpretable to the learner. In practice, the constrained set could be the instances that are \emph{dissimilar} to the learner's query but with the \emph{same} label or the instances that are \emph{similar} to the learner's query but with \emph{different} label.

To be noted, since the order of the teaching sequence matters in our problem, it expands the search space exponentially. This makes solving the above sequence optimization problem far more difficult than solving the corresponding set optimization problem~(e.g., classical machine teaching problem), which is NP-hard. Hence, we turn to approximation algorithms with polynomial time complexity. 

\section{Greedy Teaching Algorithm and Analysis: A General Complexity Bound}\label{sec:analysis-general}

\begin{figure}[t]
\centering
    \includegraphics[width=0.95\textwidth]{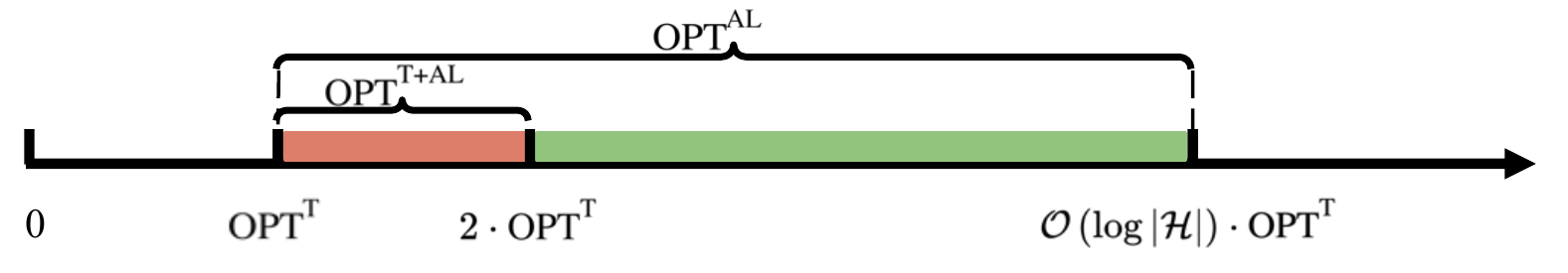}
    \caption{Relationship between the teaching dimension~$(\TD)$, the optimal sample complexity for active learning~$(\OPTAL)$, and the  sample complexity for active learning with an optimal teacher~($\TDAL$). }\label{fig:compare-opts}
\end{figure}

In this section, we propose a  simple yet effective  greedy  algorithm for solving the sequential optimization problem defined in the last section. Then, we further provide theoretical results to upper and lower bound the number of rounds of the interaction between the learner and the teacher or the sample complexity, for the general cases~(i.e., we don't have any assumptions on the active learner).\footnote{All the proofs are deferred to the Appendix due to page limit.} %

\subsection{Analysis on the Optimal Teacher}
To demonstrate the effectiveness of our framework, we first conduct an analysis on the relationship between the teaching dimension~$(\TD)$, the optimal sample complexity for active learning~$(\OPTAL)$, and the sample complexity for active learning with an optimal and \emph{unconstrained} teacher~($\TDAL$).  

First of all, it is not too hard to show that $\TDAL\leq \OPTAL$. This result can be proved by the fact that the optimal teacher cannot perform worse than such a teacher, which mimics the query selection strategy of the active learner. More importantly, this result also implies that the optimal teacher is always helpful for the active learner. In addition, by the following Lemma~\ref{lemma:td-vs-tdal},  we have that $\TD\leq\TDAL\leq 2\cdot\TD$, which lower and upper bounds $\TDAL$.
\begin{lemma}\label{lemma:td-vs-tdal}
For any $(\Examples, \Hypotheses)$ and active learner with query function $q$, we have $\TD\leq \TDAL \leq 2\cdot \TD$, where $\TD$ is the classic teaching complexity.
\end{lemma}
Lastly, to further understand the relationship between $\TDAL$ and $\OPTAL$, we borrow the results from~\citet{hanneke2007teaching}. As proved in~\citet{hanneke2007teaching}, we have $\TD\leq \OPTAL \leq \logH\cdot \TD$. Therefore, $\OPTAL$ can potentially be as large as $\logH\cdot\TD$, while $\TDAL$ is upper bounded by $2\cdot \TD$. This distinction shows that the gap between $\TDAL$ and $\OPTAL$ can be arbitrarily large, hence demonstrates the advantage of incorporating the teacher in active learning.
A visual illustration is provided in Figure~\ref{fig:compare-opts}. Throughout the entire paper, we use $\TDAL$ to denote the sample complexity of active learning with an initial hypothesis class $\Hypotheses$, and an optimal and unconstrained teacher.

\subsection{Greedy Teaching Policy}
\begin{algorithm}[t!]
  \caption{Greedy teaching algorithm for active leaner}\label{alg:greedy-teaching}
    \begin{algorithmic}[1]
        \State An active learner with a query function $q$; Initial version space $\hypotheses_{0}^q = \Hypotheses$; Constraint function $\xi$ for the contrastive example; Target hypothesis $\hstar$.
        \For{$t=1, 2, 3, \ldots$}
        \State learner asks a query $\qryx_t=q(\ctrx_{1:t-1}, \qryx_{1:t-1}, \hypotheses^q_{t-1})$.
        \State teacher provides the label $y_t^q=\hstar(\qryx_t)$.
        \State teacher provides a contrastive example $\ctrx_t = \argmax_{x \in \xi\left(\qryx_t\right)} \Delta(x|\ctrx_{1:t-1}, q, \Hypotheses)$~(\eqref{eq:marginal-gain}).
        \State Update learner's version space by~\eqref{eq:version-space-updates}.
        \State \textbf{if \ } $\hypotheses^q_t = \{\hstar\}$ \textbf{then \ } teaching process terminates.
        \EndFor
	\end{algorithmic}
\end{algorithm}
We now present a simple yet effective greedy teaching algorithm for teaching an active learner~(see Algorithm~\ref{alg:greedy-teaching}). For clarity of presentation, we focus on unit cost case, i.e.,~$c(x)=1$ for all $x$, which is equivalent to minimizing the number of rounds of interactions between the teacher and active learner. The extension to a more general cost will be discussed in the Appendix.

Therefore, to obtain the teaching sequence, we will solve the following minimum-cost cover problem
\begin{align}
    \min_{X\in \Examples^\star} |X| \quad s.t. \quad \hypotheses^q(X)= \{h^\star\} \quad \text{and} \quad x_t \in \xi(\qryx_t)\;\;\forall t \in [|X|].
\end{align}
For iteration $t+1$ with teaching history $x_{1:t}^c$, we define the marginal gain as the following
\begin{align}\label{eq:marginal-gain}
    \Delta(x^c|x_{1:t}^c, q, \Hypotheses) = f(x_{1:t}^c\oplus x^c|q, \Hypotheses) - f(x_{1:t}^c|q, \Hypotheses) =|\hypotheses^q(x_{1:t}^c) \setminus \hypotheses^q(x_{1:t}^c\oplus x^c)|,
\end{align}
where $f(x_{1:t}|q, \Hypotheses) = |\Hypotheses \setminus \hypotheses^q(x_{1:t})|$ is the number of hypotheses removed from the version by the teaching examples $x_{1:t}$ together with the corresponding learner's queries~(defined by~\eqref{eq:version-space-updates}), and $\oplus$ denotes the concatenation between two sequences. To be noted, when the function $f$ is maximized, it is equivalent to that only the target hypothesis $\hstar$ is left in the version space.

For the greedy teaching algorithm, as described in Algorithm~\ref{alg:greedy-teaching}, at each round, it selects the contrastive example from the constrained set that maximizes the marginal gain defined in~\eqref{eq:marginal-gain}. Such process will be repeated until the function $f$ is maximized or the budget is used up.

\subsection{Theoretical Guarantees for Greedy Teaching Policy}\label{sec:general_bounds}
We now turn to analyzing the theoretical performance of the greedy algorithm. In general, for active learning, there is no guarantee for the sample complexity of arbitrary learners, and sometimes, the worst case complexity can be up to $\bigO(|\Hypotheses|)$~\citep{dasgupta2005analysis}, which is uninformative.

To understand the role of the greedy teacher in active learning, we rely on the following two properties of the objective function~$f$. These two properties are usually used in the characterizing the submodularity or string submodularity of a function~\citep{zhang2015string,hunziker2019teaching}.

\begin{definition}[\textbf{Pointwise submodularity ratio}]
    For any sequence function $f$, the pointwise submodularity ratio with  respect to any sequences $\sigma$, $\sigma'$ and query function $q$ and hypothesis class $\hypotheses$ is defined as
    \begin{align}
        \rho_{\hypotheses}(\sigma, \sigma', q) = \min_{x \in \Examples} \frac{\Delta(x|\sigma, q, \hypotheses)}{\Delta(x|\sigma\oplus\sigma', q, \hypotheses)} .
    \end{align}
\end{definition}
\begin{definition}[\textbf{Pointwise backward curvature}]
For any sequence function $f$, the pointwise backward curvature with  respect to any sequences $\sigma$, $\sigma'$ and query function $q$ and hypothesis class $\hypotheses$ is defined as
    \begin{align}
        \gamma_{\hypotheses}(\sigma, \sigma', q) = 1 - \frac{f(\sigma'\oplus\sigma|q, \hypotheses) - f(\sigma|q, \hypotheses)}{f(\sigma'|q, \hypotheses)-f(\emptyset|q, \hypotheses)}.
    \end{align}
\end{definition}
The submodularity ratio measures the degree of diminishing returns of the underlying sequence function by computing the ratio between the marginal gains. When the sequence function $f$ is string/sequence submodular~\citep{zhang2015string,Alaei2010maximizing}, the minimum pointwise submodularity ratio over all possible sequences is $1$. By contrast, the pointwise backward curvature measures the degree of diminishing returns of the difference between marginal gains. For postfix monotone sequence functions, the pointwise backward curvature is no greater than $1$~\citep{zhang2015string}.

In addition, at each round, the teacher can only select examples from a constrained set, which limits its power. To accommodate such limitations, we can view the teacher as an $\alpha$-approximate greedy teacher in the analysis. The parameter $\alpha$ characterizes the ratio between the marginal gain of an unconstrained greedy teacher and that of the constrained greedy teacher. For an active learner with query function $q$, constraint $\xi$, and version space $\hypotheses^q_0 = \Hypotheses$, we can compute the corresponding $\alpha$ by
\begin{align}
    \alpha = \max_t \frac{\max_{x\in\Examples}\Delta(x|x^c_{1:t-1},q, \Hypotheses)}{\max_{x\in\xi(\qryx_t)}\Delta(x|x^c_{1:t-1},q, \Hypotheses)}.
\end{align}
In general, the parameter $\alpha$ measures how strict the constraint is. When $\alpha$ becomes large, the effectiveness of the greedy teacher will correspondingly decrease.

The following theorem summarizes our first theoretical result, which provides a problem-dependent upper bound for the sample complexity of greedy teaching algorithms. 

\begin{theorem}\label{thm:general-bound-on-sample-complexity}
 The sample complexity of the $\alpha$-approximate greedy algorithm for any active learner with a initial hypothesis class $\Hypotheses$ is at most
\begin{align}\label{eq:general-bound-on-sample-complexity}
\left(\frac{\alpha\cdot \bigO\left(\logH\cdot\log\left(|\Hypotheses|/\gamma^g\right)\right)}{\rho^g\gamma^g\log(\gamma^g/(\gamma^g-1))} + \frac{\alpha\cdot\bigO\left(\log\left(|\Hypotheses|/\gamma^g\right)\right)}{\rho^g\gamma^g}\right)\cdot \TDAL,
\end{align}
where $\gamma^g = \max_{\hypotheses \in \Hypotheses'} \gamma^g_{\hypotheses}$ and $\rho^g = \min_{\hypotheses \in \Hypotheses'} \rho^g_{\hypotheses}$ with
\begin{align}
    \gamma^g_{\hypotheses} = \max_{i\geq 1} \gamma_{\hypotheses}(\sigma^{\hypotheses}, x^{\hypotheses}_{1:i}, q), \quad \rho^g_{\hypotheses} = \min_{i, j\geq 0} \rho_{\hypotheses}(x^{\hypotheses}_{1:i}, \sigma_{1:j}^{\hypotheses}, q),
\end{align}
which are computed with respect to the greedy teaching sequence $x^{\hypotheses}$ and the optimal teaching sequence $\sigma^{\hypotheses}$ for the corresponding active learner with a initial hypothesis class $\hypotheses\in  \Hypotheses'=\{\hypotheses^q(X)|X\in\Examples^\star \land |\hypotheses^q(X)|\geq 2\}$ and query function $q$. 
\end{theorem}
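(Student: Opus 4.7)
The plan is to adapt the sequence-greedy analysis (in the spirit of Zhang et al. and Hunziker et al.) to our constrained planning setting, with the two problem-dependent parameters $\rho^g$ and $\gamma^g$ substituting for the usual submodularity/curvature assumptions. Let $\sigma^{\hypotheses}$ denote an optimal unconstrained teaching sequence of length at most $\TDAL$ that collapses initial version space $\hypotheses$ to $\{\hstar\}$, and let $x^g$ denote the greedy sequence produced by Algorithm~\ref{alg:greedy-teaching} on the actual instance. Write $\hypotheses_t := \hypotheses^q(x^g_{1:t})$ and $m_t := |\hypotheses_t|$; the goal is to upper bound the smallest $T$ with $m_T = 1$ in terms of $\alpha$, $\rho^g$, $\gamma^g$, $\TDAL$, and $|\Hypotheses|$.

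The core step is a one-step progression lemma: at every round $t$ with $m_t \ge 2$,
$$\Delta(\ctrx_{t+1} \mid x^g_{1:t}, q, \Hypotheses) \;\geq\; \frac{\rho^g\,\gamma^g\,(m_t - 1)}{\alpha \cdot \TDAL}.$$
To establish this, I would first note $\hypotheses_t \in \Hypotheses'$ and consider its optimal teaching sequence $\sigma^{\hypotheses_t}$, which by definition satisfies $f(\sigma^{\hypotheses_t}\mid q, \hypotheses_t) = m_t - 1$ and has length at most $\TDAL$. Applying the pointwise backward curvature definition with $\sigma = \sigma^{\hypotheses_t}$ and $\sigma' = x^g_{1:t}$ (worst-cased over $\hypotheses_t\in\Hypotheses'$) gives
$$f\bigl(x^g_{1:t} \oplus \sigma^{\hypotheses_t} \bigm| q,\Hypotheses\bigr) - f\bigl(x^g_{1:t}\bigm| q,\Hypotheses\bigr) \;\geq\; \gamma^g\,(m_t - 1).$$
Telescoping the left-hand side as $\sum_i \Delta(\sigma^{\hypotheses_t}_i \mid x^g_{1:t}\oplus\sigma^{\hypotheses_t}_{1:i-1}, q, \Hypotheses)$ and then applying the pointwise submodularity ratio termwise to strip the $\sigma^{\hypotheses_t}_{1:i-1}$ prefix yields $\sum_i \Delta(\sigma^{\hypotheses_t}_i \mid x^g_{1:t}, q, \Hypotheses) \geq \rho^g\gamma^g(m_t-1)$; averaging over the at most $\TDAL$ terms produces an index whose unconstrained marginal gain is at least $\rho^g\gamma^g(m_t-1)/\TDAL$. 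The $\alpha$-approximation on the constrained set finally propagates this (with a $1/\alpha$ loss) to the greedy's actual pick $\ctrx_{t+1}$, and the identity $m_t - m_{t+1} = \Delta(\ctrx_{t+1}\mid x^g_{1:t}, q, \Hypotheses)$ closes the inequality.

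Rearranging the progression lemma as $m_{t+1} - 1 \leq (m_t - 1)\bigl(1 - \rho^g\gamma^g/(\alpha\TDAL)\bigr)$, a standard two-phase telescoping then produces the bound: Phase~1 iterates the multiplicative recursion until $m_t - 1$ drops below a threshold on the order of $\gamma^g$, contributing the first summand of the theorem (the denominator $\log(\gamma^g/(\gamma^g-1))$ arising from converting the geometric rate into a $\log$-of-reciprocal form); Phase~2 invokes the weaker always-available one-per-round guarantee to clean up the residual $O(\log(|\Hypotheses|/\gamma^g))$ hypotheses and yields the second summand. The main technical obstacle is the progression lemma itself: since $\rho_\hypotheses$ and $\gamma_\hypotheses$ are \emph{pointwise} notions parameterized by specific sequences rather than global submodularity assumptions, I must carefully verify that the particular sequences appearing in the argument (namely $x^g_{1:t}$ and the optimal sequence $\sigma^{\hypotheses_t}$ for the \emph{current} shrunken class $\hypotheses_t$, not the initial $\Hypotheses$) are legitimately captured by the worst-case extrema $\rho^g = \min_{\hypotheses\in\Hypotheses'}\rho^g_\hypotheses$ and $\gamma^g = \max_{\hypotheses\in\Hypotheses'}\gamma^g_\hypotheses$, and that the backward-curvature and submodularity-ratio inequalities compose without blowing up the constants. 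A secondary subtlety is that $|\sigma^{\hypotheses_t}|$ may be strictly less than $\TDAL$, which only strengthens the progression and hence only helps the bound.
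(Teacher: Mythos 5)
Your overall architecture (combine the submodularity ratio and the backward curvature into a per-round progress bound, then telescope) is the same family of argument as the paper's, but your one-step progression lemma is not derivable from the definitions and is in fact false whenever $\gamma^g>1$. The backward-curvature definition gives $f(\sigma'\oplus\sigma\mid q,\hypotheses)-f(\sigma\mid q,\hypotheses)=(1-\gamma)\,f(\sigma'\mid q,\hypotheses)$, hence $\Delta(\sigma\mid\sigma',q,\hypotheses)=f(\sigma\mid q,\hypotheses)-\gamma\,f(\sigma'\mid q,\hypotheses)$: curvature is a \emph{penalty} on the residual gain of the optimal sequence, not a multiplicative boost. Your claimed inequality $f(x^g_{1:t}\oplus\sigma^{\hypotheses_t})-f(x^g_{1:t})\ge\gamma^g(m_t-1)$ cannot hold, since the left-hand side is at most $m_t-1$ (one cannot remove more than the $m_t-1$ remaining non-target hypotheses) while $\gamma^g\ge 1$ by Remark~\ref{rmk:rho-gamma-range}. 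The correct consequence, which is what the paper uses, is $\Delta(\sigma\mid x^c_{1:t})\ge f(\sigma)-\gamma^g f(x^c_{1:t})$, a quantity that decays to zero once the greedy has covered a $1/\gamma^g$ fraction of $\Hypotheses$. This is precisely why a single geometric recursion cannot be run to completion: the paper runs the recursion $f(x^c_{1:t+1})\ge\frac{\rho^g}{\alpha K}f(\sigma)+\bigl(1-\frac{\gamma^g\rho^g}{\alpha K}\bigr)f(x^c_{1:t})$ (with $K=\TDAL$) for $T_1=\alpha\,\bigO(\log(|\Hypotheses|/\gamma^g))\cdot\TDAL/(\rho^g\gamma^g)$ rounds to cover $|\Hypotheses|/\gamma^g$ hypotheses, then \emph{restarts} the entire argument on the residual class $\Hypotheses_2$ (using Lemma~\ref{lemma:td-vs-tdal} to control its teaching complexity), and the number of restarts $n\le\lceil\logH/\log(\gamma^g/(\gamma^g-1))\rceil$ is exactly where the first summand's $\log(\gamma^g/(\gamma^g-1))$ denominator comes from. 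Your recursion $m_{t+1}-1\le(m_t-1)(1-\rho^g\gamma^g/(\alpha\TDAL))$ would finish in $\bigO(\alpha\TDAL\logH/(\rho^g\gamma^g))$ rounds and make the first summand superfluous; proving something strictly stronger than the theorem should have been a warning sign.

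A second, related gap is the one you flag but do not close: your argument instantiates $\gamma$ and $\rho$ with the pair consisting of the greedy sequence for $\Hypotheses$ up to time $t$ and the optimal sequence for the \emph{current} class $\hypotheses_t$, whereas $\gamma^g_{\hypotheses}$ and $\rho^g_{\hypotheses}$ are defined only over prefixes of the greedy and optimal sequences \emph{for the same class} $\hypotheses$. The paper sidesteps this by keeping $\sigma=\sigma^{\Hypotheses}$ (the optimal sequence for the phase's initial class) fixed throughout each phase and switching reference classes only at phase boundaries. To repair your proof, replace the progression lemma with $\max_{x\in\Examples}\Delta(x\mid x^c_{1:t})\ge\frac{\rho^g}{\TDAL}\bigl(f(\sigma)-\gamma^g f(x^c_{1:t})\bigr)$, propagate the $1/\alpha$ loss, solve the resulting saturating recursion within a phase, and then add the restart layer over residual hypothesis classes.
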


\begin{remark}\label{rmk:rho-gamma-range}
For any active learner, $1 \leq \gamma^g \leq |\Hypotheses|-1$ and $\rho^g\leq 1$. When $\gamma \rightarrow 1$, the l.h.s.\ term in the parentheses of~\eqref{eq:general-bound-on-sample-complexity} will vanish. When $\gamma^g\rightarrow |\Hypotheses|-1$, the r.h.s.\ term will vanish.
\end{remark}
Theorem~\ref{thm:general-bound-on-sample-complexity} bounds the worst case sample complexity of greedy teaching algorithms.  To be noted, both $\gamma^g$ and $\rho^g$ are computed with respect to the greedy teaching sequence and the optimal teaching sequence for the active learner with some initial hypothesis class $\hypotheses\subseteq \Hypotheses$, rather than two arbitrary sequences. Therefore, based on Remark~\ref{rmk:rho-gamma-range} and~\citet{zhang2015string}, if the objective function is postfix monotone, then we will have $\gamma^g=1$, otherwise, we can only conclude that $\gamma^g\geq 1$.  In particular, when $\gamma^g \rightarrow 1$, the r.h.s.\ term in~\eqref{eq:general-bound-on-sample-complexity} will dominate, whereas when $\gamma^g\rightarrow |\Hypotheses|-1$, the l.h.s. term will become dominating. Informally, $\gamma^g$ can be a measure of the difficulty of the underlying sequence optimization problem, and a larger $\gamma^g$ will be likely to imply that the problem is more difficult.
In what follows, we further argue that the linear dependency on $\alpha$ cannot be avoided, showing that Theorem~\ref{thm:general-bound-on-sample-complexity} is non-vacuous. 

\begin{theorem}\label{thm:general-lower-bound-on-sample-complexity}
There exists a constraint function $\xi$, version space $\Hypotheses$ and active learner with query function $q$, such that the sample complexity of the greedy teacher is at least $\Omega(\sqrt{|\Hypotheses|})\cdot \TDAL$.
\end{theorem}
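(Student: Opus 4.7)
The plan is to exhibit an explicit ``grid'' instance that realizes an $\Omega(\sqrt{|\Hypotheses|})$ gap. Fix an integer $n$, let $|\Hypotheses| = n^2+1$, and take
\begin{align*}
    \Hypotheses &= \{\hstar\} \cup \{h_{i,j} : i,j \in [n]\}, & \Examples &= \{c\} \cup \{a_i\}_{i\in[n]} \cup \{b_j\}_{j\in[n]}.
\end{align*}
Assign labels so that $\hstar(x) = +1$ for every $x$; $h_{i,j}(a_{i'}) = -1$ iff $i' = i$; $h_{i,j}(b_{j'}) = -1$ iff $j' = j$; and $h_{i,j}(c) = -1$ for every $(i,j)$. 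Then $S(a_i) = \{h_{i,j}\}_{j\in[n]}$ is ``row $i$'', $S(b_j) = \{h_{i,j}\}_{i\in[n]}$ is ``column $j$'', and $S(c) = \Hypotheses \setminus \{\hstar\}$. Fix the active learner whose query function $q$ maps into $\{a_i\}_{i\in[n]} \cup \{b_j\}_{j\in[n]}$ (for example, the max-information-gain learner restricted to this set), and define the constraint function by $\xi(a_i) = \{a_{i'} : i' \neq i\} \cup \{b_j\}_{j\in[n]}$ and $\xi(b_j) = \{b_{j'} : j' \neq j\} \cup \{a_i\}_{i\in[n]}$, so that $c \notin \xi(\qryx)$ for any learner query $\qryx$.

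The first step is to observe that $\TDAL = 1$. The unconstrained optimal teacher responds to the learner's first query $\qryx_1 \in \{a_i, b_j\}$ with the contrastive example $c$. Since any hypothesis $h$ satisfying both $h(\qryx_1) = +1$ and $h(c) = +1$ must equal $\hstar$, the updated version space is $\{\hstar\}$ after a single round.

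The second step is to lower-bound the greedy teacher's sample complexity by a vertex-cover argument. Because $c$ lies outside both the range of $q$ and every $\xi(\qryx_t)$, every example used throughout the interaction is a row example $a_i$ or a column example $b_j$. After $T$ rounds, let $R, C \subseteq [n]$ collect the row and column indices that have appeared; then $|R| + |C| \leq 2T$. Since $h_{i,j}$ is eliminated only when $i \in R$ or $j \in C$, the teaching process terminates only once $(R,C)$ forms a vertex cover of the complete bipartite graph $K_{n,n}$, whose minimum vertex cover has size $n$. Therefore $2T \geq n$, i.e., the greedy teacher requires $T \geq n/2 = \Omega(\sqrt{|\Hypotheses|})$ rounds. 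Combined with $\TDAL = 1$, this yields the claimed $\Omega(\sqrt{|\Hypotheses|})\cdot\TDAL$ lower bound.

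The main obstacle is the bookkeeping step verifying that the greedy teacher cannot shortcut the vertex cover through subtle history-dependent interactions with the learner's queries. This is handled by the structural observation that every admissible contrastive example in $\xi(\qryx_t)$, as well as every admissible learner query, is a single-row-or-column example, so each round enlarges $|R| + |C|$ by at most $2$ regardless of the history. This reduces the argument to the deterministic vertex-cover calculation above.
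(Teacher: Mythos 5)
Your proposal is correct, and it establishes the theorem by a genuinely different construction than the paper's. The paper builds a one-dimensional ``chain'' instance: a neighborly constraint $\xi$ that only permits contrastive examples adjacent to the query, together with carefully tuned coverage sizes ($m/6$, $m/6+1$, $m/6-1$, then $\sqrt{m}+4-i$), so that the greedy teacher is lured step by step down the chain and ends up showing all $\sqrt{m}+3$ examples, while the optimal \emph{constrained} teacher finishes in $4$ steps and the unconstrained one in $2$. Your grid construction instead excises the single all-covering example $c$ from every constrained set and from the learner's query range, and then a vertex-cover count on $K_{n,n}$ shows that \emph{any} teacher obeying $\xi$ --- greedy or optimal --- needs $\Omega(n)=\Omega(\sqrt{|\Hypotheses|})$ rounds, while the unconstrained teacher finishes in one round via $c$. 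This buys you two things: the argument is robust to the learner's and the greedy teacher's actual choices (no bookkeeping of which example greedy prefers at each step, which is the delicate part of the paper's proof), and it cleanly isolates the source of the gap as the constraint itself, i.e.\ the $\alpha$ parameter, which is exactly the dependency the theorem is meant to certify as unavoidable. What it does \emph{not} show, and what the paper's chain example additionally exhibits, is a separation between the greedy constrained teacher and the \emph{optimal constrained} teacher; in your instance the two coincide up to constants. Since the theorem is stated against $\TDAL$ (the unconstrained optimum), your proof suffices. Two minor points: under the paper's cost accounting (Lemma~\ref{lemma:td-vs-tdal} counts both the query and the contrastive example per round) your $\TDAL$ equals $2$ rather than $1$, which is immaterial asymptotically; and you should note explicitly that some positive-gain constrained example always exists while $h_{i,j}$ remains (indeed $a_i$ or $b_j$ lies in every $\xi(\cdot)$), so the greedy interaction does terminate.
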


Theorem~\ref{thm:general-lower-bound-on-sample-complexity} shows that the linear dependency on $\alpha$ cannot be improved when without any assumptions on the constraint function~$\xi$, hypotheses space and the active learner. To porve Theorem~\ref{thm:general-lower-bound-on-sample-complexity}, we constructed a special case, detailed in the Appendix. In the example, the optimal teacher only uses a constant number of examples to teach the active learner, while the greedy teacher needs to provide all the examples, whose number is of the order $\Omega(\sqrt{|\Hypotheses|})$.

\section{Teaching Greedy Active Learners}\label{sec:analysis-gbs}

In this section, we consider the problem of teaching a greedy active learner, concretely, the Generalized Binary Search~(GBS)~\citep{nowak2008generalized}, which selects the query in a greedy manner~(i.e., the one that maximizes the utility). At each iteration $t+1$ with the version space $\hypotheses_t$, the GBS learner selects the query according to the expected number of hypotheses that can be covered by the query~(i.e., the utility of the query). Specifically, for binary classification problem, for each query $x$, the utility can be computed as
\begin{align}
    u(x|\hypotheses_t) =\frac{2}{|\hypotheses_t|} \cdot \left(\sum_{h \in \hypotheses_t}\mathds{1}[h(x)=-1]\right)\cdot \left(\sum_{h \in \hypotheses_t}\mathds{1}[h(x)=1]\right).
\end{align}
The above utility function measures how discriminative the selected query is. GBS selects the query that maximizes the utility, and hence it lower bounds the minimal number of hypotheses to be removed after the query. As pointed out in~\citet{dasgupta2005analysis}, the worst case sample complexity for the GBS learner is $\bigO(|\Hypotheses|)$. However, for practical problems, there are usually some structures in the hypotheses space and the sample space we can exploit to get a tighter bound. Specifically, we consider problems with the $k$-neighborly structure~\citep{nowak2008generalized,pmlr-v84-mussmann18a}.

 \begin{definition}[$k$\textbf{-neighborly}]  Consider the graph $(V, E)$ with vertex set $V=\Examples$, and edge set $E=\{(x, x')| d_{\Hypotheses}(x, x')\leq k,\; \forall\;x,x'\in \Examples \}$, where $d_{\Hypotheses}(x, x')=|\{h|h\in \Hypotheses\mbox{ and }h(x)\neq h(x')\}|$. The query and hypotheses space $(\Examples, \Hypotheses)$ is k-neighborly if the induced graph is connected.
 \end{definition}
 Intuitively, the $k$-neighborly structure ensures that for any two points $x, x'\in \Examples$, there is a path formed by similar pairs connecting them. With such structures, \citet{nowak2011geometry} further introduces the \emph{coherence parameter}, which plays an important role in bounding the worst case  complexity of GBS. 
 
 \begin{definition}[\textbf{Coherence parameter}] The coherence parameter for $(\Examples, \Hypotheses)$ is defined as
 \begin{align}
     c^\star(\Examples, \Hypotheses) := \min_{P}\max_{h\in \Hypotheses}\left|\sum_{x\in\Examples} h(x) dP(x)\right|,
 \end{align}
 where we minimize over all possible distributions on $\Examples$.
 \end{definition}
 The coherence parameter $c^\star$ measures how evenly the query selected by GBS bisects the hypotheses. With the above assumptions, \citet{nowak2008generalized} shows that the sample complexity of GBS can be upper bounded by $\bigO(\log (|\Hypotheses|)/\log(1/\eta))$, where $\eta = \max\{(1+c^\star)/2, (k+1)/(k+2)\}$. This implies that, the smaller $c^\star$ and $k$, the more efficient GBS is.
 
Next, we seek to understand the sample complexity when pairing GBS with a greedy teacher. Can we also upper bound its sample complexity, even though the interaction between the learner and the teacher introduces extra complexity? The result is summarized in the following theorem.

 \begin{theorem}\label{thm:bound-for-GBS-k-neighborly}
 For the GBS learner with a initial hypothesis class $\Hypotheses$ and ground set $\Examples$, if $(\Examples, \Hypotheses)$ is k-neighborly and with coherence parameter $c^\star$, then the sample complexity of the greedy teaching algorithm with any constraint function $\xi$ is at most
 \begin{align}
      \frac{\alpha}{\epsilon}\cdot \bigO\left(\log^2(|\Hypotheses|)\right)\cdot \TDAL,
 \end{align}
 where $\epsilon = \min\left\{(1-c^\star)/(1+c^\star), c^\star/(k-c^\star) \right\}$, and $\alpha\leq \max\{k/c^\star, 2/(1-c^\star)\}$.
 \end{theorem}
 
 Theorem~\ref{thm:bound-for-GBS-k-neighborly} gives a near-optimal bound for the greedy teaching algorithm for GBS learners, which guarantees that its performance cannot be arbitrarily bad. However, the bound cannot be directly compared to that of GBS alone, as we don't know $\TDAL$. Unfortunately, our following remark shows that the sample complexity of GBS with a greedy teacher is not guaranteed to be smaller compared to GBS alone, and for some extreme cases, it can be even larger. 
 
 \begin{remark}\label{rmk:Greedy-can-fail-on-GBS}
 The sample complexity of GBS with greedy teacher~(even without the constraints) is not guaranteed to be smaller than that of GBS alone.
 \end{remark}
 
 The bad news from Remark~\ref{rmk:Greedy-can-fail-on-GBS} shows that the greedy teacher may not be satisfactory for teaching a GBS learner under some circumstances, though the greedy teaching algorithm performs very well in classic machine teaching problems. For a better understanding of the failure of greedy teachers, we provide a realizable example~(see Appendix), where the GBS with a greedy teacher will requires more samples than that of GBS alone. We hope that this example can motivate improved teaching algorithms for active learners, beyond the greedy heuristics.

\section{Numerical Experiments}
In this section, we empirically evaluate the greedy teaching algorithm with $\beta$-greedy active learners~(a rich class of active learners including GBS) in a simulated environment. We seek to understand how the type of active learner~(i.e., varying $\beta$) and the constraint on contrastive examples affect the performance of the greedy teacher. To dig deeper, we further demonstrate how  these two factors affect the value of $\alpha$ and $\rho^g$, which plays an important role in our theoretical results.

\subsection{Experimental Setup}
\textbf{$\beta$-greedy active learners:} In the experiments, we consider the class of $\beta$-greedy active learners, for which the selected query $x^q_{t+1}$ satisfies
\begin{align}\label{ineq:beta-greedy-al}
    u(x^q_{t+1}|\hypotheses_t) \geq \frac{1}{\beta}\cdot \max_{x\in \Examples} u(x|\hypotheses_t).
\end{align}
 We can interpolate $\beta$ from $1$ to $+\infty$ to cover a broad range of active learners with different properties. Specifically, when $\beta=1$, we recover the GBS active learner, whereas, when $\beta=+\infty$, we recover the random active leaner. In general, as $\beta$ increases, the strategy becomes less greedy. For the $\beta$-greedy active leaner, the query function $q$ returns a query that satisfies inequality~(\ref{ineq:beta-greedy-al}).
 
 \textbf{Datasets:} We conducted experiments on two datasets: \texttt{Vespula-Weevil} and \texttt{Butterfly-Moth}, which were adopted by~\citet{singla2013actively,pmlr-v32-singla14} for verifying the algorithm for teaching the crowd to classify images. Specifically, \texttt{Vespula-Weevil} is a synthesized dataset with 200 images, which are generated by sampling 2-$d$ features from two bi-variate Gaussian distributions.
 The dataset \texttt{Butterfly-Moth} consists of 160 real images with the two categories Butterfly and Moth. The $2$-d embedding of each image is inferred from the annotation process using a generative Bayesian model proposed in~\citet{welinder2010multi}. More details about the datasets can be found in the Appendix.
 
 \textbf{Hypothesis space:} Following~\citet{pmlr-v32-singla14}, for the \texttt{Vespula-Weevil} dataset, we generate the hypotheses by randomly sampling from eight multivariate Gaussian distributions, with mean $\vmu_i = [\pi/4\cdot i, 0]$, and covariance $\mSigma_i = [2,0;0, 5e^{-3}]$, where $i$ is varying from $0$ to $7$. For the \texttt{Butterfly-Moth} dataset, the hypotheses are inferred from the annotation process with the same generative Bayesian model for inferring the $2$-d embedding.

\subsection{Effects of Active Learners and Constraints}
\begin{figure}[t]
\centering
    \includegraphics[width=1.0\textwidth]{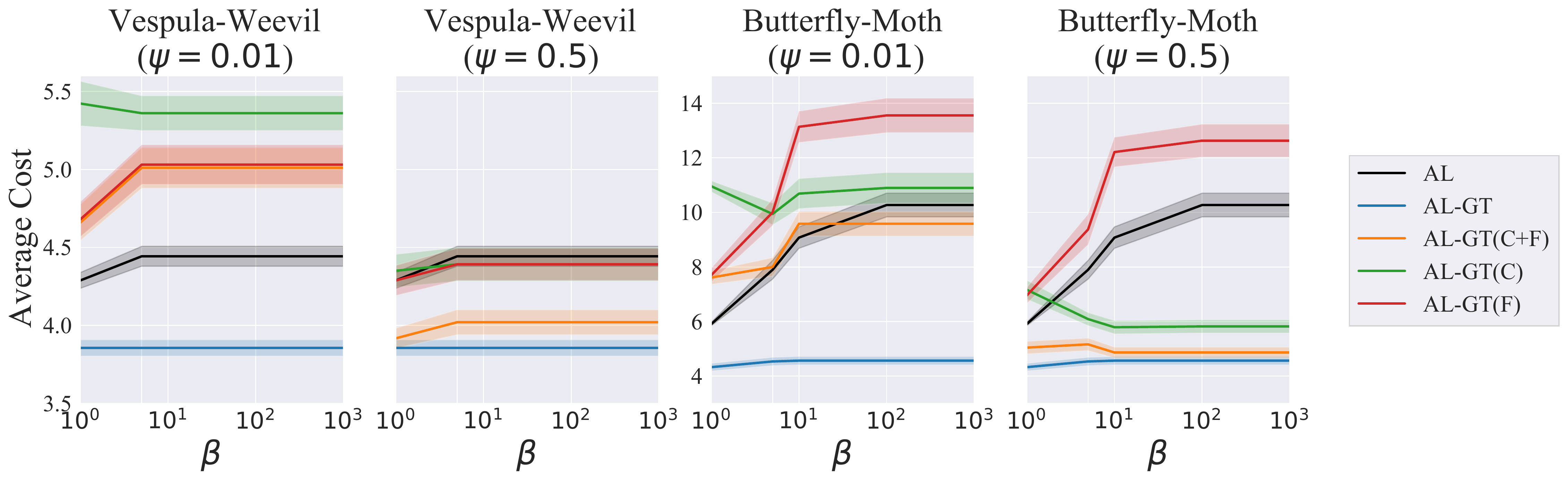}
  \caption{The average cost for different active learners and constraints. \textbf{AL} denotes the setting of active learner alone. \textbf{AL-GT} corresponds to active learner with an unconstrained greedy teacher, and \textbf{C, F, C+F} correspond to different constraints on the constrastive examples. $\psi\in[0, 1]$ is a parameter controlling the size of the search space, which only affects \textbf{AL-GT(C), AL-GT(F)}, and  \textbf{AL-GT(C+F)}. The shaded area is the one-standard error.}\label{fig:exp-beta-vs-cost}
\end{figure}
\begin{figure}[t]
\centering
    \includegraphics[width=1.0\textwidth]{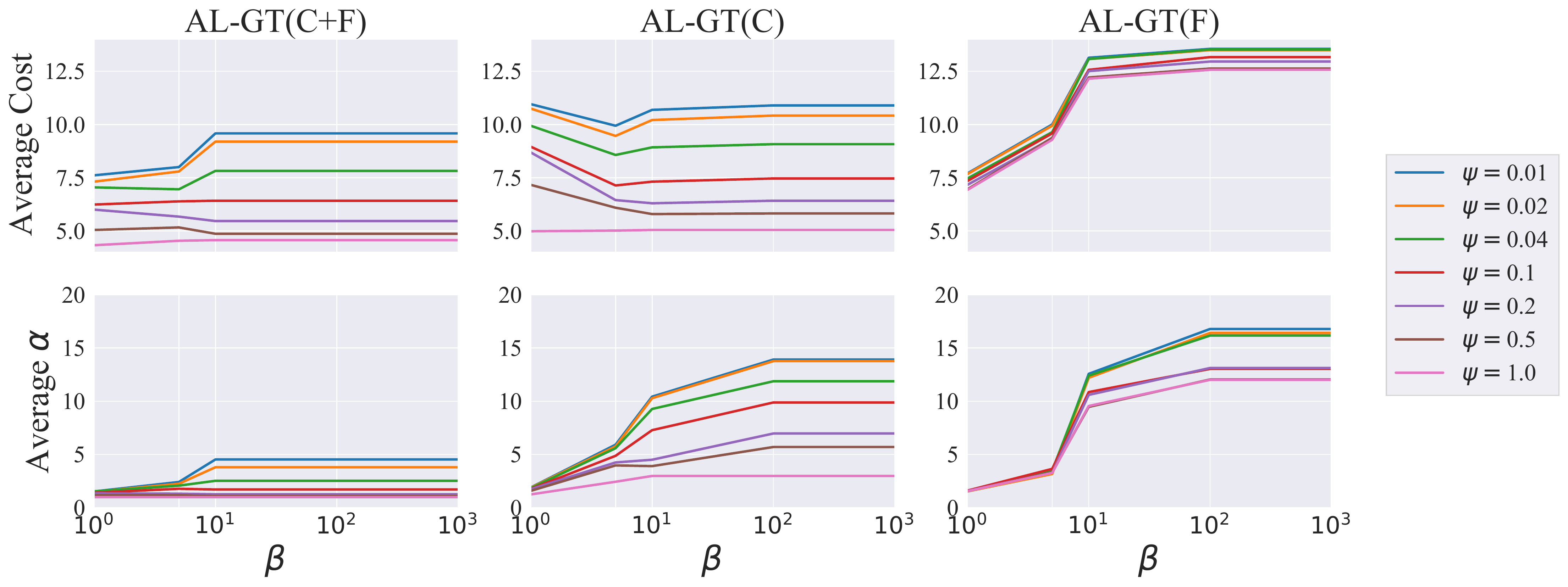}
  \caption{How do active learners~($\beta$) and constraints~($\psi$) affect the teaching cost~(top row) and $\alpha$~(bottom row). The three columns are corresponding to the results of \textbf{AL-GT(C+F)}, \textbf{AL-GT(C)} and \textbf{AL-GT(F)} respectively. The results are obtained on the \texttt{Butterfly-Moth} dataset. We don't display the standard error for clarity. }\label{fig:exp-alpha-vs-cost}
\end{figure}

We first study how does the active learner~(AL) affects the teaching performance under different constraints. Specifically, we vary $\beta$ in $[1, 5, 10, 100, 1,000]$ to model different learners. In terms of the constraint on the teacher's example, we consider the following three types of the constraints: 1) examples that are close to the learner's query but with different label~(denoted by \textbf{C}); 2) examples that are far away from the learner's query but with the same label(denoted by \textbf{F}); 3) the union of 1)\& 2)~(denoted by \textbf{C+F}). For each constraint, we also adopt a parameter $\psi \in [0, 1]$ to control the size of the search space~(i.e., the closest or furthest $\psi$ portion of the points with different or the same label as the query). We also include the unconstrained teacher, which can freely select examples.

To obtain more robust results, we iterate through all the hypotheses to serve as the target hypothesis once, and compute the averaged cost for identifying the target hypothesis, accompanied by the standard error. The results are presented in Figure~\ref{fig:exp-beta-vs-cost}. We observe that for both datasets, AL with greedy teacher~(AL-GT, unconstrained) always achieves the best performance in terms the average cost. When $\beta$ becomes larger, the average cost of AL-GT increase very slightly. For the others, when $\psi$ is small, they usually perform worse than AL along. This is because when $\psi$ is small, the candidate points in the constrained set are too restrictive and they are usually redundant to the query, making the contrastive example uninformative. By contrast, when $\psi$ is large enough, AL-GT(C+F) can significantly outperform AL alone, showing that the teacher is helpful in this case. In summary, when $\beta$ becomes large and the constraint is not too strict, the teacher will be more helpful~(i.e., the gap between AL and AL with teacher becomes larger).

We further study how the constraints and $\beta$ affect the parameters $\alpha$. The results are provided in Figure~\ref{fig:exp-alpha-vs-cost}. In general, we can observe that as the constraints become stricter~(i.e., $\psi$ becomes smaller), $\alpha$ will correspondingly increase as well as the cost for teaching the active learner. This is consistent with our Theorem~\ref{thm:general-bound-on-sample-complexity} and Theorem~\ref{thm:bound-for-GBS-k-neighborly} in Section~\ref{sec:analysis-general} and Section~\ref{sec:analysis-gbs}. In addition, when $\beta$ becomes larger, $\alpha$ is also increasing, which is consistent with the trend in the cost plots~(top row in Figure~\ref{fig:exp-alpha-vs-cost}). Besides, we also compute the empirical values of $\rho^g$, finding that for different $\beta$ and constraints, $\rho^g$ stays very close to $1$. According to our theoretical bounds, this may also explain why the trends in the cost plots are mostly consistent with those in the plots of $\alpha$~(bottom row in Figure~\ref{fig:exp-alpha-vs-cost}).

\subsection{Visualization of the Teaching Sequence}
Lastly, we visualize a teaching sequence of AL-GT(C+F) in Figure~\ref{fig:qualitative-results}. The results are obtained with  $\beta=1$ and $\psi=0.5$ on the \texttt{Vespula-Weevil} dataset. The first and third examples are the learner's queries and the second and fourth ones are the teacher's provided contrastive examples. In general, weevils have relatively short heads with similar color to the body. In contrast, vespula have big and contrasting heads. From Figure~\ref{fig:qualitative-results}, we can observe  that at the first round, the teacher selects an image of vespula with similar body size to the learner's query, but with a head that is in relatively larger and of different color. This highlights the key features~(i.e., the color and size of the head) to distinguishing weevil and vespula. Subsequently, the teacher provides another image of weevil which is visually very different from the query in terms the size of body and head, but is in the same category.

\begin{figure}[t]
\centering
    \includegraphics[width=1.0\textwidth]{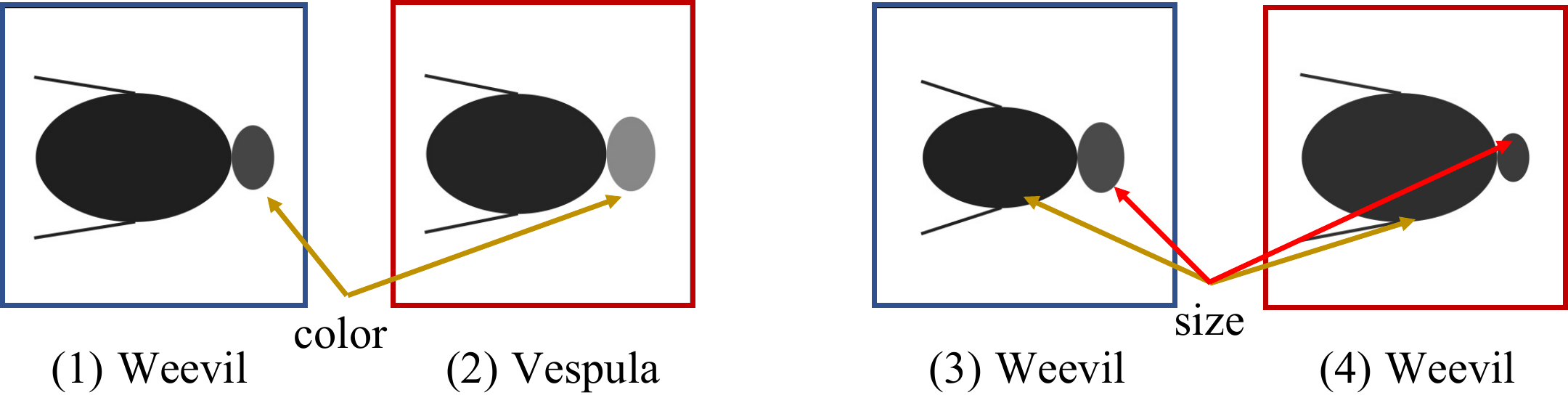}
    \caption{A visualization of a teaching sequence. (1)\&(3) are learner's queries and (2)\&(4) are the contrastive examples provided by the teacher~(i.e., AL-GT(C+F)). }\label{fig:qualitative-results}
    \vspace{-0.2cm}
\end{figure}

\section{Conclusion}
We introduced a novel interactive protocol for teaching an active leaner. We first derived a general performance guarantee for the greedy teacher with arbitrary active learners, and then provided a stronger performance bound for teaching GBS learners with a greedy teacher under mild assumptions. In addition, we also demonstrated the limitations of greedy teachers by showing that 1) the performance of the greedy teacher can be arbitrarily worse than that of the optimal teacher due to the constrained choices of contrastive examples; and 2) the greedy teaching strategy for the GBS learner is not guaranteed to be better than GBS alone. Lastly, we provided empirical results to demonstrate the effectiveness and limitations of the greedy teachers. We believe that our results have taken a significant step in bringing active learning and machine teaching closer to real-world settings in which the teacher collaborates with the learner to achieve the learning goal.

\section{Social Impact}\label{sec:social-impact}
In this work, we provided an greedy teaching algorithm for active learners with theoretical guarantees. Our algorithm is derived for teaching an active leaner in a collaborative setting to achieve some learning goal, which can be potentially helpful for improving many real-world interactive learning systems, e.g., automated tutoring or multi-agent learning system.  

However, just as a coin has two sides, our algorithm can be easily extended to poisoning the active learner by change the objective of the teacher. This can potentially threaten the safety of deploying machine learning models in online products.

\newpage 
\begin{ack}
We thank Sandra Zilles for the helpful discussion and valuable feedback. This work is supported in part by NSF IIS-2040989 , and a C3.ai DTI Research Award 049755.
\end{ack}
\bibliographystyle{plainnat}
\bibliography{citations.bib}

\newpage

\begin{appendices}

\setcounter{equation}{0}
\setcounter{figure}{0}
\setcounter{table}{0}
\setcounter{section}{0}
\setcounter{theorem}{0}
\setcounter{lemma}{0}
\setcounter{proposition}{0}
\setcounter{definition}{0}
\setcounter{remark}{1}

\section{Datasets}
\begin{figure}[h]
\centering
    \includegraphics[width=\textwidth]{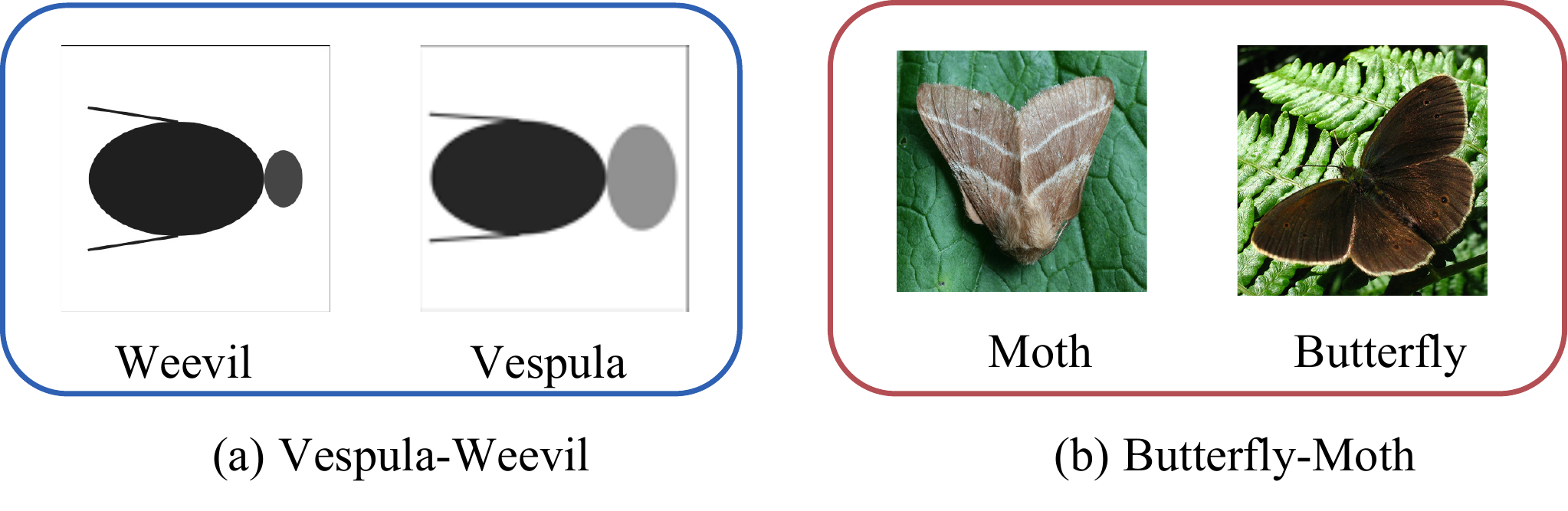}
  \caption{An visualization of the data in \texttt{Vespula-Weevil} and \texttt{Butterfly-Moth} datasets. }\label{fig:illustrative-datasets}
  \vspace{-0.3cm}
\end{figure}

We visualize the samples from both \texttt{Vespula-Weevil} and \texttt{Butterfly-Moth} datasets in Figure~\ref{fig:illustrative-datasets}. For \texttt{Vespula-Weevil} dataset, the images were generated by varying its body size and color, and also the head size and color. To distinguish each image, only a 2-d feature, i.e., 1) the ratio of the size of body and head; and 2) the contrast of the color of head and body, is sufficient. For \texttt{Butterfly-Moth} dataset, there are four species in it. Specifically, there are two different species of butterflies~(Peacock butterfly and Ringlet butterfly) and also two different species of moths~(Catepillar moth and Tiger moth). In general, it is much easier to classify Peacock butterfly and caterpillar moth. However, Tiger moth and Ringlet butterfly are hard to be correctly classified due to the visual similarity.

\section{Extension to General Costs}
When the cost is non-uniform, i.e., $c(x)\neq c(x')$ if $x \neq x'$. Then, we can select the example at iteration $t$ based on the following rule
\begin{align}
    x^c_t \in \argmax \frac{\Delta(x|\ctrx_{1:t-1},q, \Hypotheses)}{c(x)}.
\end{align}

\section{Proofs of Theorem~\ref{thm:general-bound-on-sample-complexity}}
\begin{definition}[\textbf{Pointwise submodularity ratio}]
    For any sequence function $f$, the pointwise submodularity ratio with  respect to any sequences $\sigma$, $\sigma'$ and query function $q$ and hypothesis class $\hypotheses$ is defined as
    \begin{align}
        \rho_{\hypotheses}(\sigma, \sigma', q) = \min_{x \in \Examples} \frac{\Delta(x|\sigma, q, \hypotheses)}{\Delta(x|\sigma\oplus\sigma', q, \hypotheses)} .
    \end{align}
\end{definition}
\begin{definition}[\textbf{Pointwise backward curvature}]
For any sequence function $f$, the pointwise backward curvature with  respect to any sequences $\sigma$, $\sigma'$ and query function $q$ and hypothesis class $\hypotheses$ is defined as
    \begin{align}
        \gamma_{\hypotheses}(\sigma, \sigma', q) = 1 - \frac{f(\sigma'\oplus\sigma|q, \hypotheses) - f(\sigma|q, \hypotheses)}{f(\sigma'|q, \hypotheses)-f(\emptyset|q, \hypotheses)}.
    \end{align}
\end{definition}

\begin{lemma}\label{lemma:td-vs-tdal}
For any $(\Examples, \Hypotheses)$ and active learner with query function $q$, we have $\TD\leq \TDAL \leq 2\cdot \TD$, where $\TD$ is the classic teaching complexity.
\end{lemma}
\begin{proof}
Suppose the optimal teaching sequence corresponding to $\TD$ is $(x^t_1, ..., x^t_{\TD})$ and the optimal teaching sequence corresponding to $\TDAL$ is $(x^q_1, x^c_1, ..., x^q_m, x^c_m)$ with $m = \TDAL/2$. 

We prove the lemma by contrapositive. Suppose that $\TDAL < \TD$. For this case, the teaching sequence $(x^t_1, ..., x^t_{\TD})$ is not optimal. This contradicts with the definition of $\TD$. Therefore, we must have $\TD \leq \TDAL$.

To show that $\TDAL \leq 2\cdot \TD$, consider the following case, we can replace $x^c_i$ with $x^t_i$ for all $i$. Then when $m = \TD$, the sequence $(x^q_1, x^c_1, ..., x^q_m, x^c_m)$ must cover all the incorrect hypotheses. Therefore, we can conclude that the optimal teaching sequence for $\TDAL$ must be no longer than $2\cdot\TD$.

\end{proof}

\begin{proof}

Suppose that the optimal teaching sequence for active learner with a initial hypothesis class $\Hypotheses$ is $\sigma$, and $K=\TDAL$, we first have the following holds by using the definition of $\rho^g$~(we omitted the dependency on $q$ and $\Hypotheses$ in $\Delta(\cdot)$ and $f(\cdot)$ for clarity)
\begin{align}
    \Delta(\sigma|x^c_{1:t}) &= \sum_{i=1}^K \Delta(\sigma_i| \ctrx_{1:t}\oplus  \sigma_{1:i-1}) \\
    &\leq \frac{1}{\rho^g} \cdot \sum_{i=1}^K \Delta(\sigma_i|\ctrx_{1:t})\\
    &\leq \frac{K}{\rho^g} \cdot \max_{x\in \Examples} \Delta(x|\ctrx_{1:t}).
\end{align}
Then, using the definition of backward curvature~($f(\emptyset) = 0$ in our case), we get
\begin{align}
    \gamma^g\cdot f(\ctrx_{1:t}) \geq f(\ctrx_{1:t}) - f(\ctrx_{1:t}\oplus \sigma) + f(\sigma).
\end{align}
By combining the first results, we get
\begin{align}
     f(\sigma) - \gamma^g \cdot f(\ctrx_{1:t}) &\leq \Delta(\sigma|\ctrx_{1:t}) \\
    & \leq \frac{K}{\rho^g} \cdot \max_{x\in \Examples} \Delta(x|\ctrx_{1:t}).
\end{align}
By the assumptions of $\alpha$-approximate greedy algorithm, the marginal gain of selected example $\ctrx_{t+1}$ by the greedy policy satisfies
\begin{align}
    \Delta(\ctrx_{t+1}|\ctrx_{1:t}) \geq \frac{1}{\alpha}\cdot  \max_{x\in \Examples} \Delta(x|\ctrx_{1:t}).
\end{align}
This implies the objective at iteration $t+1$ can be lower bounded by
\begin{align}
    &\frac{1}{\alpha}\cdot \max_{x\in \Examples} \Delta(x|\ctrx_{1:t}) \geq \frac{\rho^g}{\alpha K}\cdot (f(\sigma) - \gamma^g\cdot f(\ctrx_{1:t}))\\
    \Rightarrow \quad & f(\ctrx_{1:t}) + \Delta(\ctrx_{t+1}|\ctrx_{1:t}) \geq \frac{\rho^g}{\alpha K}\cdot f(\sigma) + \left(1-\frac{\gamma^g\rho^g}{\alpha K}\right)\cdot f(\ctrx_{1:t}).
\end{align}
Then, we get the following recursive form of the greedy performance
\begin{align}
    f(x^c_{1:t+1}) \geq \frac{\rho^g}{\alpha K}\cdot f(\sigma) + \left(1-\frac{\gamma^g\rho^g}{\alpha K}\right)\cdot f(\ctrx_{1:t}).
\end{align}
By solving the recursion, we have
\begin{align}
    f(x^c_{1:T_1}) &\geq f(\sigma)\cdot \sum_{i=1}^{T_1-1}\frac{\rho^g}{\alpha K}\cdot \prod_{t=0}^{i-1} \left(1-\frac{\rho^g\gamma^g}{\alpha K}\right)^t \\
    & = \frac{f(\sigma)}{\gamma^g}\cdot\left(1-\left(1-\frac{\rho^g \gamma^g}{\alpha K}\right)^{T_1}\right).
\end{align}
Since $(1+a)^x \leq e^{ax}$ when $x\geq 0$, we have
\begin{align}\label{eq:proof-thm1-eq1}
    f(x^c_{1:T_1}) \geq \frac{f(\sigma)}{\gamma^g}\cdot\left(1-e^{-(\rho^g\gamma^g\cdot T_1 )/(\alpha K)}\right).
\end{align}
Then, plugging the following in~\eqref{eq:proof-thm1-eq1},
\begin{align}
    T_1=\frac{\alpha\cdot\bigO(\log(|\Hypotheses|/\gamma^g))}{\rho^g\gamma^g}\cdot \TDAL,
\end{align}
we get
\begin{align}
    f(\ctrx_{1:T_1}) \geq \frac{ |\Hypotheses|}{\gamma^g}\cdot (1-\frac{1}{\bigO(|\Hypotheses|)}) = \frac{|\Hypotheses|}{\gamma^g}.
\end{align}
Therefore, the above results tells us that, if we run greedy teaching policy for $T_1$ steps, we are guaranteed to cover at least $|\Hypotheses|/\gamma^g$ hypotheses. When $\gamma^g = 1$, we are done.

When $\gamma^g > 1$, we denote the remaining hypotheses as $\Hypotheses_2$~(less than $(1-1/\gamma^g)\cdot|\Hypotheses|$). Since $\Hypotheses_2 \subseteq \Hypotheses$, we must have the optimal teaching complexity smaller than $2\cdot\TDAL$~(by Lemma~\ref{lemma:td-vs-tdal}). Therefore, we can recursively apply the above bound.  If we continue run the original greedy algorithm, we are guaranteed to cover $|\Hypotheses_2|/\gamma^g$ in 
\begin{align}
    T_2&\leq\frac{\alpha\cdot\bigO(\log(|\Hypotheses_2|/\gamma^g))}{\rho^g\gamma^g}\cdot \TDAL\\
    &=\frac{\alpha\cdot\bigO(\log(|\Hypotheses|\cdot(\gamma^g-1)/(\gamma^g)^2))}{\rho^g\gamma^g}\cdot \TDAL.
\end{align}
We now can write it in terms of $T_n$, 
\begin{align}
    T_n \leq \frac{\alpha\cdot\bigO(\log(|\Hypotheses|\cdot(\gamma^g-1)^{(n-1)}/(\gamma^g)^n))}{\rho^g\gamma^g}\cdot \TDAL.
\end{align}
Therefore, the total complexity to cover all the hypotheses except the target hypothesis will be
\begin{align}
    T=\sum_{i=1}^n T_i \leq \sum_{i=1}^n \frac{\alpha\cdot\bigO(\log(|\Hypotheses|\cdot(\gamma^g-1)^{(i-1)}/(\gamma^g)^i))}{\rho^g\gamma^g}\cdot \TDAL.
\end{align}
Since if there is only one hypothesis left, the algorithm will stop. Therefore, we must have the following upper bound for $n$
\begin{align}
    \left(1-\frac{1}{\gamma^g}\right)^n \leq \frac{1}{|\Hypotheses|} \quad
    \Rightarrow \quad  n \leq \left\lceil\frac{\logH}{\log (\gamma^g/(\gamma^g - 1))}\right\rceil.
\end{align}
Therefore, by plugging $n= \logH/\log (\gamma^g/(\gamma^g - 1)) + 1$ in, we get the total complexity
\begin{align}
   T&= \sum_{i=1}^{n} T_i \\
   &\leq \sum_{i=1}^n\frac{\alpha\cdot\bigO\left(\log((|\Hypotheses|/\gamma^g)\cdot(\gamma^g-1)^{(i-1)}/(\gamma^g)^{(i-1)})\right)}{\rho^g\gamma^g}\cdot \TDAL\\
    &\leq\frac{n\alpha\cdot\bigO\left(\log\left((|\Hypotheses|/\gamma^g)\cdot\sqrt{(\gamma^g-1)^{(n-1)}/(\gamma^g)^{(n-1)}}\right)\right)}{\rho^g\gamma^g}\cdot \TDAL\\
    &\leq \frac{n\alpha\cdot\bigO\left(\log(|\Hypotheses|/\gamma^g \right)}{\rho^g\gamma^g}\cdot \TDAL\\
    &\leq \left(\frac{\alpha\cdot \bigO\left(\logH\cdot\log\left(|\Hypotheses|/\gamma^g \right)\right)}{\rho^g\gamma^g\log(\gamma^g/(\gamma^g-1))} + \frac{\alpha\cdot\bigO\left(\log\left(|\Hypotheses|/\gamma^g\right)\right)}{\rho^g\gamma^g}\right)\cdot \TDAL .
\end{align}

\end{proof}

\section{Proofs of Theorem~\ref{thm:general-lower-bound-on-sample-complexity}}
\begin{theorem}\label{thm:general-lower-bound-on-sample-complexity}
There exists constraint function $\xi$, version space $\Hypotheses$ and active learner with query function $q$, such that the sample complexity of the greedy teacher is at least $\Omega(\sqrt{|\Hypotheses|})\cdot \TDAL$.
\end{theorem}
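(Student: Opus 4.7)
The plan is to exhibit an explicit family of problem instances parameterized by an integer $k\geq 2$ on which the greedy teacher is forced to spend $\Omega(k)$ rounds, whereas the optimal (unconstrained) teacher interacting with the same active learner finishes in a constant number of rounds. Choosing $|\Hypotheses|$ to be of order $k^2$ then converts this into the claimed $\Omega(\sqrt{|\Hypotheses|})\cdot \TDAL$ gap.

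First I would fix the hypothesis class. Let $|\Hypotheses| = k^2+1$, with a designated target $h^\star$ and the remaining $k^2$ non-target hypotheses partitioned into $k$ equal blocks $B_1,\ldots,B_k$ of size $k$. The ground set $\Examples$ contains three kinds of instances: a single ``universal'' example $w$ whose cover $S(w)$ equals the full set of non-target hypotheses; $k$ ``block'' examples $b_1,\ldots,b_k$ with $S(b_i)=B_i$ and hence marginal gain exactly $k$ on an untouched version space; and a sequence of ``query-only'' examples $q_1,q_2,\ldots$ whose covers are trivial, together with a designated ``steering'' instance $x^\star$ used only by the optimal teacher.

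Second I would design the query function $q$ and the constraint map $\xi$ jointly so that the learner's behavior is deterministic given the teaching history. I would let $q$ cycle through the query-only examples so that the learner's $t$-th query is $q_t$ unless the steering instance $x^\star$ has been played, in which case the next query is some special $q^\star$. The constraint $\xi(q_t)$ is designed to contain all unused block examples and $x^\star$ but to exclude the universal example $w$; only after $x^\star$ is played does the subsequent constraint set $\xi(q^\star)$ contain $w$. A clean instantiation is to realize $\xi$ as a ``sufficiently dissimilar, same-label'' rule under a crafted metric on $\Examples$, which keeps the construction aligned with the motivating interpretation in Section~2.

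With this setup the two trajectories can be analyzed side by side. On the greedy side, $\xi(q_t)$ has maximum marginal gain exactly $k$, realized by any unused block $b_i$; each round removes one block, and hence $k=\Theta(\sqrt{|\Hypotheses|})$ rounds are required to reduce the version space to $\{h^\star\}$. On the optimal side, playing $x^\star$ at the first round redirects the learner to query $q^\star$, which unlocks access to $w$, and a single subsequent move covers everything, giving $\TDAL = O(1)$. Taking the ratio yields the lower bound of $\Omega(\sqrt{|\Hypotheses|})\cdot \TDAL$.

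The main obstacle I expect is the coordination step: making $q$ and $\xi$ simultaneously consistent with the natural protocol assumptions (the query function depending only on the learner-visible history, and $\xi$ expressible as one of the ``dissimilar/same-label'' or ``similar/opposite-label'' style constraints rather than an arbitrary combinatorial gadget) while still guaranteeing the asymmetric behavior above. Once those are in place, the marginal-gain computations, the verification that $b_i$ is indeed the greedy choice at every round, and the bookkeeping that reduces $|\Hypotheses|$ to $k^2+1$ are all routine.
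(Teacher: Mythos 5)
Your construction is correct and achieves the claimed separation: the greedy teacher is pinned to $k=\Theta(\sqrt{|\Hypotheses|})$ rounds of marginal gain $k$ each, while the unconstrained optimal teacher finishes in $O(1)$ rounds via the universal example $w$, so the ratio is $\Omega(\sqrt{|\Hypotheses|})\cdot\TDAL$. The overall strategy is the same as the paper's: an explicit hard instance in which the $|\Hypotheses|-1$ non-target hypotheses are covered either by one universal example or by $\sqrt{|\Hypotheses|}$ examples of coverage $\sqrt{|\Hypotheses|}$ each, with the constraint map keeping the universal example out of the greedy teacher's reach until it is too late. The difference is in the gadgets. The paper instantiates the learner as GBS and the constraint as ``immediate neighbors of the query,'' and tunes the cover sizes ($m/6$, $m/6\pm 1$, $\sqrt{m}+4-i$) so that the GBS query sequence together with the teacher's myopic choices drags the interaction down a chain of $\sqrt{m}$ examples; you instead use a purpose-built query function that cycles through uninformative queries and a clean block partition. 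Your version is easier to verify (the greedy trajectory is immediate; no interplay between the learner's utility and the teacher's choice has to be tracked), at the cost of a degenerate active learner; the paper's version is messier but exhibits the failure already for the natural GBS learner, which both constructions are permitted to do since the theorem only asserts existence of \emph{some} $q$ and $\xi$. Two small points. First, the coordination obstacle you flag is not a real one: $\xi$ need not be realizable as a metric-based dissimilarity rule (the paper's own $\xi$ is an ad hoc neighborhood rule), but note that $\xi:\Examples\to 2^{\Examples}$ depends only on the current query, so ``all unused block examples'' should simply be ``all block examples'' (exhausted blocks have zero marginal gain and are never re-picked). Second, $\TDAL$ is defined as the complexity of the \emph{unconstrained} optimal teacher, so the $x^\star$-redirection machinery is unnecessary for bounding $\TDAL$ --- the unconstrained teacher plays $w$ in the first round; your steering gadget would only be needed to additionally show that the \emph{constrained} optimal teacher is also $O(1)$, which the theorem does not require.
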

\begin{proof}

To see this, consider the following realizable example~(please refer to Figure~\ref{fig:illustrative-example}).

For each $x_i$, we have the following cases
\setlist{nolistsep}
\begin{itemize}
\item $x_1$: it can cut away $m/6$ hypotheses from the version space.
\item $x_2$: it can cut away $m/6 + 1$ hypotheses from the version space.
\item $x_3$: it can cut away $m/6-1$ hypotheses from the version space.
\item $x_i$~($i\geq4$): it can cut away $\sqrt{m}+4-i$ hypotheses from the version space.
\end{itemize}

\begin{figure}[h]
\centering
    \includegraphics[width=0.58\textwidth]{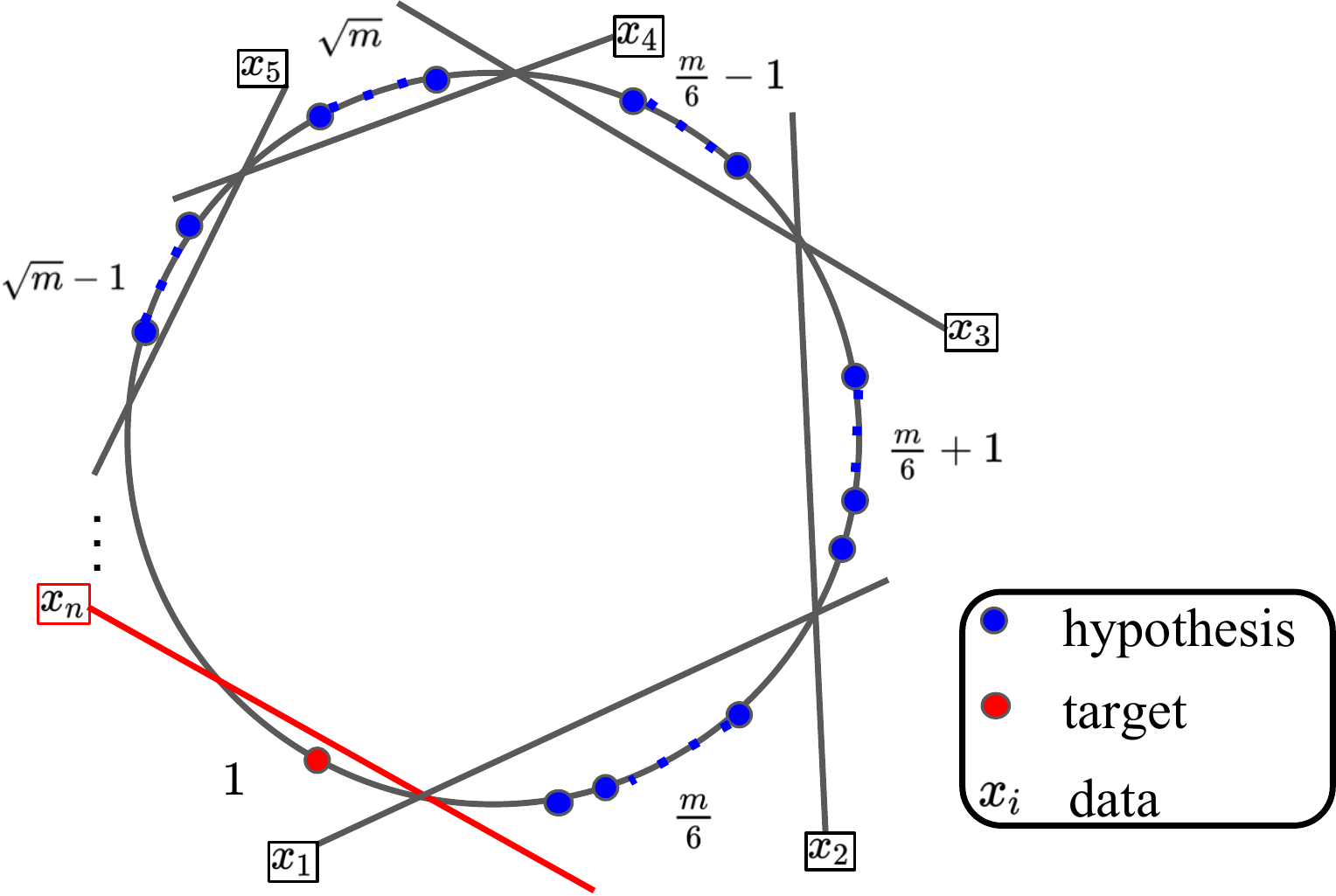}
  \caption{An illustrative example when the greedy teacher can be arbitrary worse than the optimal teacher. We visualize the data points and hypotheses in the dual space, where each hypothesis is a point and each data point is a hyperplane. Specifically, we use {\color{blue} $\bullet$} denotes the hypotheses that are not the target one, {\color{red} $\bullet$} denotes the target hypothesis to teach, and $x_i$ are the data points. }\label{fig:illustrative-example}
  \vspace{-0.1cm}
\end{figure}

Obviously, $n = \sqrt{m}+3$. There are in total $\sqrt{m} + 3$ examples in the data set. The total number of hypotheses are $|\Hypotheses| = m$. We assume that $m=(6k+6)^2$ where $k\in \mathbb{N}_+$, which means $\sqrt{m} < m/6-1$. In addition, we have the following assumptions on the active learner and the constraint $\xi$:
\begin{itemize}
    \item Active learner: we consider the GBS learner.
    \item Constraint $\xi$: given the query, the teacher can only choose contrastive examples from the immediate neighbors of the query point. For example, if the learner picks $x_4$, then the teacher can only pick contrastive examples from the constrained set $\{x_3, x_5\}$.
\end{itemize}

\textbf{Optimal teacher without constraints}: for the optimal teacher without the constraints, the total sample complexity is 2. That is, no matter what the active learner chooses, the teacher can always choose $x_n$, which cover all the hypotheses except for $h^\star$. This gives us $\TDAL=\bigO(1)$.

\textbf{Optimal teacher with constraints}: for the optimal teacher with the constraints, the total sample complexity is 4. That is, the active learner first queries $x_2$, since it cuts the version space most evenly~(so, it is most favored by the GBS learner). Then, the teacher will select $x_4$. In the next iteration, the learner will query $x_1$, and the techer will pick $x_n$, which cut away all the hypotheses except the target hypothesis.

\textbf{Greedy teacher with constraints}: for the greedy teacher with the constraints, the total complexity is $\sqrt{m} + 3$. At the first iteration, the learner queries $x_2$, and the greedy teacher will select $x_1$. Then, the learner will query $x_3$, and the teacher picks $x_4$. As we continue, for iteration $i \geq 2$, the learner will query $x_{i+1}$, and correspondingly, the teacher will pick $x_{i+1}$. This process will end until $x_n$ given to the learner. Therefore, all the examples will be needed to teach the learner with greedy teacher. 

In the above example, we have the sample complexity for the greedy teacher at least
\begin{align}
    \Omega(\sqrt{m})\cdot \TDAL = \Omega(\sqrt{|\Hypotheses|})\cdot \TDAL .
\end{align}

However, the bound in Theorem~\ref{thm:general-bound-on-sample-complexity} also depends on $\rho^g$ and $\gamma^g$. 
In this example, the length of the optimal sequence must be 1~(see the reasoning in optimal teacher without constraints), because the example $x_n$ is sufficient to teach the learner the target hypothesis. Therefore, we can get 
\begin{align}
    \rho^g = 1 \quad \text{and} \quad \gamma^g = 1.
\end{align}
Since when $\gamma^g \rightarrow 1$, the L.H.S. in the parentheses of~\eqref{eq:general-bound-on-sample-complexity} in Theorem~\ref{thm:general-bound-on-sample-complexity} is $0$. Therefore, the entire bound will be simplified to
\begin{align}
\alpha\cdot\bigO\left(\log\left(|\Hypotheses|\right)\right)\cdot \TDAL.
\end{align}

To compute $\alpha$ for the example, we can directly use the definition
\begin{align}
    \alpha = \max_t \frac{\max_{x\in\Examples}\Delta(x|x^c_{1:t-1},q)}{\max_{x\in\xi(\qryx_t)}\Delta(x|x^c_{1:t-1},q)} \approx \sqrt{m}.
\end{align}
Since $\logH = o(\sqrt{m})$, and combining the above reasoning, we can conclude that the linear dependency on $\alpha$ cannot be avoided in the bound for general cases.

\end{proof}

\section{Proof of Theorem~\ref{thm:bound-for-GBS-k-neighborly} and Remark~\ref{rmk:Greedy-can-fail-on}}

 \begin{definition}[$k$\textbf{-neighborly}]  Consider the graph $(V, E)$ with vertex set $V=\Examples$, and edge set $E=\{(x, x')| d_{\Hypotheses}(x, x')\leq k,\; \forall\;x,x'\in \Examples \}$, where $d_{\Hypotheses}(x, x')=|\{h|h\in \Hypotheses\mbox{ and }h(x)\neq h(x')\}|$. The query and hypotheses space $(\Examples, \Hypotheses)$ is k-neighborly if the induced graph is connected.
 \end{definition}
 
  \begin{definition}[\textbf{Coherence parameter}] The coherence parameter for $(\Examples, \Hypotheses)$ is defined as
 \begin{align}\label{eq:c-star-def}
     c^\star(\Examples, \Hypotheses) := \min_{P}\max_{h\in \Hypotheses}\left|\sum_\Examples h(x) dP(x)\right|,
 \end{align}
 where we minimize over all possible distribution on $\Examples$.
 \end{definition}

 \begin{lemma}\citep{nowak2008generalized}\label{lemma:nowak-k-neighborly-GBS}
 Assume that $(\Examples, \Hypotheses)$ is $k$-neighborly, and the coherence parameter is $c^\star$. Then, for every $\Hypotheses' \subset \Hypotheses$, the query $x$ selected according to GBS must reduce the viable hypotheses by a factor of at least $(1+c^\star)/2$, i.e.,
 \begin{align}
     \left|\sum_{h\in \Hypotheses'} h(x) \right| \leq c^\star \cdot |\Hypotheses'|,
 \end{align}
 
 or the set $\Hypotheses'$ is small, i.e.,
 \begin{align}
     |\Hypotheses'|\leq \frac{k}{c^\star}.
 \end{align}
 \end{lemma}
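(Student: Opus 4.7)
The plan is to prove the dichotomy by contradiction. Suppose the query $x^\star$ selected by GBS satisfies $|\sum_{h\in\Hypotheses'} h(x^\star)| > c^\star |\Hypotheses'|$. Because GBS picks the query minimizing this imbalance, the hypothetical failure actually propagates to every point: the function $g(x) := \sum_{h\in\Hypotheses'} h(x)$ satisfies $|g(x)| > c^\star|\Hypotheses'|$ for every $x\in\Examples$. The goal is to show that this universal imbalance forces $|\Hypotheses'|\leq k/c^\star$. The argument combines two structural resources already available: (i) the coherence parameter, which bounds certain weighted averages of a single $h(x)$; and (ii) the $k$-neighborly assumption, which provides a Lipschitz-type control on how fast $g$ can vary along a connecting path. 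A preliminary observation I would record is that since the max in the definition of $c^\star$ is over $\Hypotheses$ while $\Hypotheses'\subseteq\Hypotheses$, we have $c^\star(\Examples,\Hypotheses')\leq c^\star$, so there is a distribution $P^\star$ on $\Examples$ with $|\sum_{x} h(x) dP^\star(x)|\leq c^\star$ for every $h\in\Hypotheses'$.

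The argument then splits according to the sign pattern of $g$. In the first case, assume $g$ has a single sign on $\Examples$, say $g(x) > c^\star|\Hypotheses'|$ for all $x$ (the other sign is symmetric). Integrating against $P^\star$ gives $\mathbb{E}_{P^\star}[g] > c^\star|\Hypotheses'|$ from the pointwise lower bound. On the other hand, swapping the order of sum and expectation and applying the coherence bound term-by-term yields
\begin{equation*}
|\mathbb{E}_{P^\star}[g]| = \Big|\sum_{h\in\Hypotheses'}\mathbb{E}_{P^\star}[h(x)]\Big| \leq \sum_{h\in\Hypotheses'}|\mathbb{E}_{P^\star}[h(x)]| \leq c^\star|\Hypotheses'|,
\end{equation*}
contradicting the previous strict inequality. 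So Case 1 is impossible outright.

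In the second case, $g$ takes both signs: pick $x_+$ with $g(x_+)>c^\star|\Hypotheses'|$ and $x_-$ with $g(x_-)<-c^\star|\Hypotheses'|$. By $k$-neighborly connectivity there is a path $x_+=x^{(0)},x^{(1)},\ldots,x^{(L)}=x_-$ of points whose consecutive pairs satisfy $d_{\Hypotheses}(x^{(i)},x^{(i+1)})\leq k$. For any such pair, each $h\in\Hypotheses'$ with $h(x^{(i)})\neq h(x^{(i+1)})$ contributes exactly $\pm 2$ to $g(x^{(i)})-g(x^{(i+1)})$, and the number of such $h$'s is at most $d_{\Hypotheses}(x^{(i)},x^{(i+1)})\leq k$, so $|g(x^{(i)})-g(x^{(i+1)})|\leq 2k$. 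Since every $x$ satisfies $g(x) > c^\star|\Hypotheses'|$ or $g(x) < -c^\star|\Hypotheses'|$ (no intermediate values are permitted under our standing assumption), as we walk the path from $x_+$ to $x_-$ there must be a single edge where $g$ jumps from a value $>c^\star|\Hypotheses'|$ to a value $<-c^\star|\Hypotheses'|$. This single jump exceeds $2c^\star|\Hypotheses'|$, which combined with the per-edge bound $2k$ gives $2c^\star|\Hypotheses'| < 2k$, i.e., $|\Hypotheses'| < k/c^\star$, completing the dichotomy.

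The delicate step I expect to be the main obstacle is isolating the offending edge in Case 2: one needs the "no intermediate values" property (a consequence of the contradiction assumption on every $x$) together with the path connectivity, so that a sign flip forces a single-step jump across the full forbidden band rather than a gradual traversal through small values. Once this is framed correctly, the $k$-neighborly Lipschitz estimate closes out the bound immediately; Case 1 then requires only a clean application of the coherence inequality to rule out constant-sign imbalance. The result is exactly the stated dichotomy.
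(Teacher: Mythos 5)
Your proof is correct and follows essentially the same route as the paper's: both use the minimizing distribution from the coherence parameter to rule out a single-signed imbalance (hence both signs of $g$ must occur), then walk a $k$-neighborly path from a positive to a negative point, bound each edge's change by $2k$, and force a single sign-flipping edge to jump across the band of width $2c^\star|\Hypotheses'|$, yielding $|\Hypotheses'| < k/c^\star$. The only difference is presentational (you frame the whole dichotomy as one contradiction and split on the sign pattern of $g$, whereas the paper splits on whether the minimum imbalance exceeds $c^\star$), so no further comment is needed.
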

 \begin{proof}
For each $\Hypotheses' \subset \Hypotheses$, consider the following two situations,
\begin{enumerate}
    \item $\min_{x \in \Examples} |\Hypotheses'|^{-1}|\sum_{h \in \Hypotheses'} h(x)| \leq c^\star$,
    \item $\min_{x \in \Examples} |\Hypotheses'|^{-1}|\sum_{h \in \Hypotheses'} h(x)| > c^\star$.
\end{enumerate}
For the first situation where $\min_{x \in \Examples} |\Hypotheses'|^{-1}|\sum_{h \in \Hypotheses'} h(x)| \leq c^\star$, GBS will query the corresponding $x$ that minimize $|\Hypotheses'|^{-1}|\sum_{h \in \Hypotheses'} h(x)|$. This ensures that $\left|\sum_{h\in \Hypotheses'} h(x) \right| \leq c^\star \cdot |\Hypotheses'|$.

For the second situation where $\min_{x \in \Examples} |\Hypotheses'|^{-1}|\sum_{h \in \Hypotheses'} h(x)| > c^\star$, we claim that there must exists $x^+, x^- \in \Examples$ such that $|\Hypotheses'|^{-1}\sum_{h \in \Hypotheses'} h(x^+) \geq c^\star$ and $|\Hypotheses'|^{-1}\sum_{h \in \Hypotheses'} h(x^-) < -c^\star$.  To see this, recalling that
\begin{align}\label{eq:proof-lemma-2-thm-3-c-star}
    c^\star = c^\star(\Examples, \Hypotheses) = \min_P \max_{h \in \Hypotheses}\left|\sum_{\Examples} h(x) dP(x)\right|.
\end{align}
Then, we must have the following hold
\begin{align}\label{eq:proof-lemma-2-thm-3-c-star-ineq}
    c^\star \geq |\Hypotheses'|^{-1}\left|\sum_{h\in \Hypotheses'} \sum_{\Examples} h(x) dP(x)\right|,
\end{align}
where $P$ is the corresponding minimizer in~\eqref{eq:proof-lemma-2-thm-3-c-star}. If
\begin{align}
    |\Hypotheses'|^{-1}\sum_{h \in \Hypotheses'} h(x^+) \geq c^\star,\; \forall x \in \Examples \quad \text{or}\quad |\Hypotheses'|^{-1}\sum_{h \in \Hypotheses'} h(x^+) < -c^\star,\; \forall x \in \Examples,
\end{align}
then~\eqref{eq:proof-lemma-2-thm-3-c-star-ineq} won't be satisfied, which leads to a contradiction. Therefore, we proved the claim. 

Since $(\Examples, \Hypotheses)$ is $k-$neighborly, there exists a sequence of examples connecting $x^+$ and $x^-$. In the sequence, every two immediate neighbors are $k-$neighborhood~(i.e., at most $k$ hypotheses in $\Hypotheses$ predicts differently on them). Besides, there also must exists two neighbors, say $x, x'$, in the sequence such that the signs of $|\Hypotheses'|^{-1}\sum_{h \in \Hypotheses'} h(x)$ and $|\Hypotheses'|^{-1}\sum_{h \in \Hypotheses'} h(x')$ are different. Without the loss of generality, let's assume that $|\Hypotheses'|^{-1}\sum_{h \in \Hypotheses'} h(x) > c^\star$ and $|\Hypotheses'|^{-1}\sum_{h \in \Hypotheses'} h(x') < -c^\star$.  Following the above observation, we immediately have the following two inequalities,
\begin{align}
    &\sum_{h \in \Hypotheses'} h(x) - \sum_{h \in \Hypotheses'} h(x') \geq 2c^\star |\Hypotheses'|\\
    &\left|\sum_{h \in \Hypotheses'} h(x) - \sum_{h \in \Hypotheses'} h(x')\right| \leq 2k,
\end{align}
where the second inequality follows from the $k-$neighborly condition. By combining these two inequalities, we get the desired results
\begin{align}
    |\Hypotheses'| < \frac{k}{c^\star}.
\end{align}

 \end{proof}

 \begin{lemma}\label{lemma:GBS-rho-bound}
 For GBS learner~(i.e., $\beta=1$), if $(\Examples, \Hypotheses)$ is k-neighborly and with coherence parameter $c^\star$, then $\rho^g \geq \min \{(1-c^\star)/(1+c^\star), c^\star/(k-c^\star)\}$.
 \end{lemma}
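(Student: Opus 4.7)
The plan is to lower bound the ratio $\Delta(x|\sigma, q, \hypotheses)/\Delta(x|\sigma\oplus\sigma', q, \hypotheses)$ via a case split on the size of the intermediate version space $\hypotheses_1 := \hypotheses^q(\sigma)$, using Lemma~\ref{lemma:nowak-k-neighborly-GBS} to control the effect of a single GBS query. I will write $\hypotheses_2 := \hypotheses^q(\sigma\oplus\sigma')$, noting that $\hstar\in\hypotheses_2\subseteq\hypotheses_1$, and will first dispatch the trivial cases: if $\sigma'$ is empty the ratio equals $1$, and if $|\hypotheses_2|\leq 1$ then teaching has already terminated and the quantity is vacuous.

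The first case assumes $|\hypotheses_1| > k/c^\star$, so that Lemma~\ref{lemma:nowak-k-neighborly-GBS} yields a non-trivial shrinkage. Appending $x$ to $\sigma$ triggers one extra round, consisting of a GBS query on $\hypotheses_1$ followed by the teacher's contrastive example $x$; the GBS query alone eliminates at least $\frac{1-c^\star}{2}|\hypotheses_1|$ hypotheses, so the numerator is at least this. For the denominator I would observe that the first round inside $\sigma'$ also begins with a GBS query drawn from $\hypotheses_1$, which shrinks the version space to at most $\frac{1+c^\star}{2}|\hypotheses_1|$; later rounds of $\sigma'$ can only shrink it further, so $|\hypotheses_2|\leq \frac{1+c^\star}{2}|\hypotheses_1|$, giving a ratio of at least $(1-c^\star)/(1+c^\star)$.

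The second case covers $|\hypotheses_1|\leq k/c^\star$. Here I would argue $\Delta(x|\sigma, q, \hypotheses)\geq 1$: whenever $|\hypotheses_1|\geq 2$ there is some non-target hypothesis disagreeing with $\hstar$ on some example, the GBS utility is therefore strictly positive at that example, and so the GBS-selected query eliminates at least one hypothesis. On the other side, since $\hstar$ is never eliminated, $\Delta(x|\sigma\oplus\sigma', q, \hypotheses)\leq |\hypotheses_2|-1\leq k/c^\star-1 = (k-c^\star)/c^\star$, so the ratio is at least $c^\star/(k-c^\star)$. Combining the two cases and minimizing over $\hypotheses\in\Hypotheses'$ and the indices $i,j$ in the definition of $\rho^g$ yields the claimed bound.

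The main care point will be the upper bound $|\hypotheses_2|\leq \frac{1+c^\star}{2}|\hypotheses_1|$ in the first case: it rests on the observation that the first round of $\sigma'$ necessarily starts with a GBS query from $\hypotheses_1$, so Lemma~\ref{lemma:nowak-k-neighborly-GBS} applies irrespective of what the teacher selects for $\sigma'_1$, and a single such round already suffices even when $\sigma'$ is long. The other step requiring a short justification is that GBS eliminates at least one hypothesis whenever $|\hypotheses_1|\geq 2$, which follows immediately from the strict positivity of the GBS utility at any discriminating example.
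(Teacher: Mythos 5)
Your proof is correct and follows essentially the same route as the paper's: the same case split on whether $|\hypotheses^q(\sigma)|$ exceeds $k/c^\star$, the same invocation of Lemma~\ref{lemma:nowak-k-neighborly-GBS} to control how evenly the GBS query bisects the version space, and the same final constants. If anything, your treatment of the first case is slightly more careful than the paper's: you lower-bound the numerator by the query's contribution alone and upper-bound the denominator by $|\hypotheses^q(\sigma\oplus\sigma')|-1 \leq \frac{1+c^\star}{2}|\hypotheses^q(\sigma)|$, whereas the paper drops the contrastive-example contributions from numerator and denominator simultaneously via a mediant-type inequality that, as written, needs additional justification.
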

 \begin{proof}
 Recall the definition of $\rho$~(we omitted the dependency on $\hypotheses$ for clarity),
\begin{align}
        \rho(\sigma, \sigma', q) = \min_{x \in \Examples} \frac{\Delta(x|\sigma, q)}{\Delta(x|\sigma\oplus\sigma', q)} .
\end{align}
The marginal gain is the sum of the marginal gain of the learner's query  and that of the contrastive example $x$. Given the history $\sigma$~(i.e., the teaching sequence), let's denote the marginal gain of learner's query as $\Delta^q(\sigma)$ , and the marginal gain of the contrastive example as $\Delta^c(x|\sigma)$~(exclude the gain overlapped with that of the learner's query). Then, we can rewrite $\rho$ as the following,
\begin{align}
    \rho(\sigma, \sigma', q) &= \min_{x\in\Examples} \frac{\Delta^q(\sigma) + \Delta^c(x|\sigma)}{\Delta^q(\sigma\oplus \sigma') + \Delta^c(x|\sigma\oplus \sigma')} \\
    &\geq  \frac{\Delta^q(\sigma) }{\Delta^q(\sigma\oplus \sigma') } .
\end{align}
Since $(\Examples, \Hypotheses)$ is $k$-neighborly, and the coherence parameter is $c^\star$, therefore, applying Lemma~\ref{lemma:nowak-k-neighborly-GBS}, we must have
\begin{align}
    \Delta^q(\sigma) \geq \left(\frac{1-c^\star}{2}\right)\cdot|\hypotheses^q(\sigma)|,
\end{align}
if $|\hypotheses^q(\sigma)| > k/c^\star$. This further implies that 
\begin{align}
    \Delta^q(\sigma\oplus \sigma') \leq \left(\frac{1+c^\star}{2}\right)\cdot|\hypotheses^q(\sigma)|,
\end{align}
By combining these two, we can get~(when $|\hypotheses^q(\sigma)| > k/c^\star$)
\begin{align}
     \rho(\sigma, \sigma', q) \geq \frac{1-c^\star}{1+c^\star}.
\end{align}
For the case of $|\hypotheses^q(\sigma)| \leq k/c^\star$, we simply have the following
\begin{align}
    \rho(\sigma, \sigma', q) \geq \frac{1}{k/c^\star -1 } = \frac{c^\star}{k - c^\star}.
\end{align}
Therefore, we can conclude
\begin{align}
    \rho^g \geq \min\left\{\frac{1-c^\star}{1+c^\star}, \frac{c^\star}{k - c^\star}\right\}.
\end{align}
 \end{proof}

 \begin{theorem}\label{thm:bound-for-GBS-k-neighborly}
 For the GBS learner with a initial hypothesis class $\Hypotheses$ and ground set $\Examples$, if $(\Examples, \Hypotheses)$ is k-neighborly and with coherence parameter $c^\star$, then the sample complexity of the greedy teaching algorithm with any constraint function $\xi$ is at most
 \begin{align}
      \frac{\alpha}{\epsilon}\cdot \bigO\left(\log^2(|\Hypotheses|)\right)\cdot \TDAL,
 \end{align}
 where $\epsilon = \min\left\{(1-c^\star)/(1+c^\star), c^\star/(k-c^\star) \right\}$, and $\alpha\leq \max\{k/c^\star, 2/(1-c^\star)\}$.
 \end{theorem}
 
 \begin{proof}
 By Theorem~\ref{thm:general-bound-on-sample-complexity}, we have that for any active learner and constraint function, the $\alpha$-approximate greedy teaching requires at most
 \begin{align}\label{eq:thm3-prove-general-bound}
\left(\frac{\alpha\cdot \bigO\left(\logH\cdot\log\left(|\Hypotheses|/\gamma^g\right)\right)}{\rho^g\gamma^g\log(\gamma^g/(\gamma^g-1))} + \frac{\alpha\cdot\bigO\left(\log\left(|\Hypotheses|/\gamma^g\right)\right)}{\rho^g\gamma^g}\right)\cdot \TDAL.
\end{align}
Since $\gamma^g \geq 1$, then we have
\begin{align}
    \log(|\Hypotheses|/\gamma^g) \leq \log(|\Hypotheses|).
\end{align}
Now, consider the following function with $x>1$
\begin{align}
    g(x) = \frac{a}{x\log(x/(x-1))} + \frac{1}{x},
\end{align}
of which the derivative is
\begin{align}
    \frac{\text{d}}{\text{d}x}g(x) = \frac{a-(x-1)\cdot \log^2(x/(x-1))-(ax-a)\cdot\log(x/(x-1))}{(x-1)\cdot x^2 \cdot \log^2(x/(x-1))}.
\end{align}
It's easy to verify that when $a\geq 2$, the derivative must be non-negative. Therefore, the function $g(x)$ is monotonically increasing. By replacing $a$ with $\bigO(\log(|\Hypotheses|))$\footnote{Without the loss of generality, we can assume $|\Hypotheses| \geq 8 > e^2$} and $x$ with $\gamma^g$, we have~(since $\gamma^g < |\Hypotheses|$)
\begin{align}
    \frac{\bigO(\log(|\Hypotheses|))}{\gamma^g\log(\gamma^g/(\gamma^g-1))} + \frac{1}{\gamma^g} \leq \frac{\bigO(\log(|\Hypotheses|))}{|\Hypotheses|\log(|\Hypotheses|/(|\Hypotheses|-1))} + \frac{1}{|\Hypotheses|}\leq \bigO(\log(|\Hypotheses|)).
\end{align}
By plugging in the above results to~\eqref{eq:thm3-prove-general-bound}, we simplifiy it to
\begin{align}
    \frac{\alpha}{\rho^g} \cdot \bigO(\log^2(|\Hypotheses|)) \cdot \TDAL.
\end{align}
By Lemma~\ref{lemma:GBS-rho-bound}, we can finally conclude that for GBS learner, with $k$-neighborly $(\Examples, \Hypotheses)$ and coherence parameter $c^\star$,
the sample complexity for $\alpha$-approximate greedy teaching policy is at most
\begin{align}
     \frac{\alpha}{\epsilon} \cdot \bigO(\log^2(|\Hypotheses|)) \cdot \TDAL,\quad \text{where}\quad  \epsilon = \min\left\{\frac{1-c^\star}{1+c^\star}, \frac{c^\star}{k-c^\star} \right\}.
\end{align}

To further bound the term $\alpha$, recall the definition of $\alpha$,
\begin{align}
    \alpha = \max_t \frac{\max_{x\in\Examples}\Delta(x|x^c_{1:t-1},q)}{\max_{x\in\xi(\qryx_t)}\Delta(x|x^c_{1:t-1},q)}.
\end{align}
Since the learner's query is the same, the only affecting term is the marginal gain of the contrastive example provided by the teacher. Consider the extreme case where the numerator covers the entire version space but the all the examples in $\xi(\qryx_t)$ cannot cover any hypotheses in the version. This is equivalent to say that the contribution of $x$ in the denominator is $0$. Then, by Lemma~\ref{lemma:nowak-k-neighborly-GBS}, when $|\hypotheses^q(\ctrx_{1:t-1})| > k/c^\star$, we must have the following, 
\begin{align}
    \alpha \leq \frac{|\hypotheses^q(\ctrx_{1:t-1})|}{(1-c^\star)/2\cdot |\hypotheses^q(\ctrx_{1:t-1})|} = \frac{2}{1-c^\star}.
\end{align}
Otherwise, when $|\hypotheses^q(\ctrx_{1:t-1})|\leq k/c^\star$, we must have
\begin{align}
    \alpha \leq \frac{k/c^\star}{1} = \frac{k}{c^\star}.
\end{align}
By combining them, we can conclude that
\begin{align}
    \alpha \leq \max\left\{\frac{2}{1-c^\star}, \frac{k}{c^\star}\right\}.
\end{align}

 \end{proof}

\begin{remark}\label{rmk:Greedy-can-fail-on}
 The sample complexity of GBS with greedy teacher~(even without the constraints) is not guaranteed to be smaller than that of GBS alone.
 \end{remark}
 \begin{proof}
 To show this, we provide a realizable example below. In Figure~\ref{fig:illustrative-example-GBS}, each circle is corresponding to a data point, and each dot is a hypothesis in the version space. Specifically, the {\color{blue} blue $\bullet$} are the incorrect hypotheses and the {\color{red} red $\bullet$} is the correct/target hypothesis.  For each data point $x_i$, all the hypotheses covered by the corresponding circle are those hypotheses that predict incorrectly on $x_i$. Therefore, upon the learner receives the data point $x_i$, all the hypotheses covered by the corresponding circle will be immediately removed from the learner's version space. 
 \begin{figure}[h]
\centering
    \includegraphics[width=0.66\textwidth]{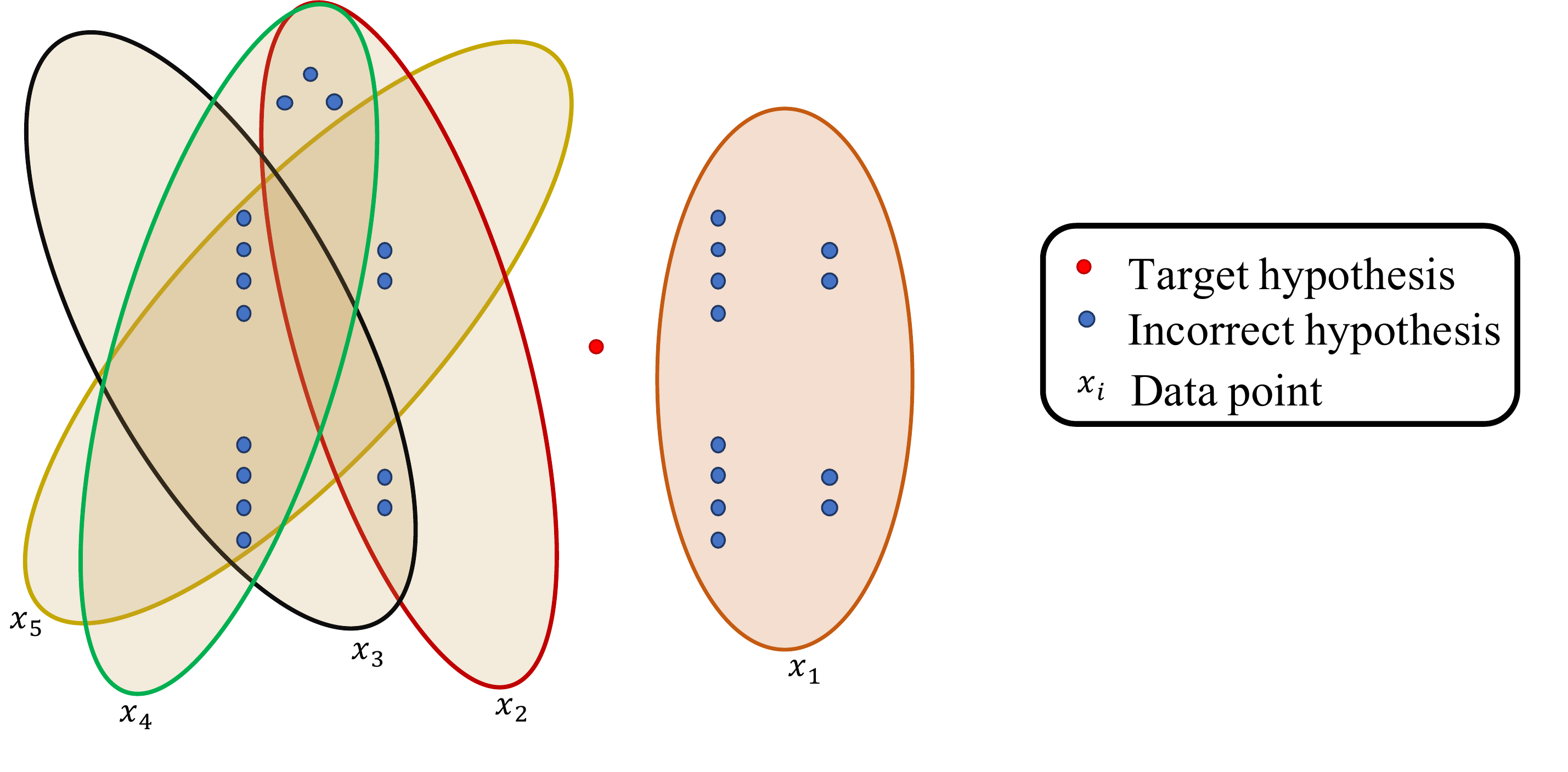}
  \caption{An illustrative example when GBS alone can achieve better sample complexity than that of GBS with greedy teacher. }\label{fig:illustrative-example-GBS}
  \vspace{-0.5cm}
\end{figure}

 \begin{itemize}
    
\item For GBS learner alone, it first selects the query $x_1$ which covers/removes the right part of the hypotheses. Then, it selects $x_2$, because $x_2$ is the most uncertain point among $\{x_2, x_3, x_4, x_5\}$. Lastly, it selects any point from $\{x_3, x_4, x_5\}$, leaving only the target hypothesis~({\color{red} red $\bullet$}) in the version space. Therefore, the sequence for the GBS alone is $\{x_1, x_2, x_3\}$ or $\{x_1, x_2, x_4\}$ or $\{x_1, x_2, x_5\}$, of which the total cost is $3$.
 
\item For GBS with a greedy teacher~(with any constraints on the contrastive example), the GBS learner will first query $x_1$ and the teacher will select $x_4$ as the contrastive example. In the next round, the learner will query either $x_3$~(or $x_5$), and the teacher will select either $x_2$ or $x_5$~(or either $x_2$ or $x_3$). Then, only the target hypothesis is left. Therefore, the sequence for GBS with greedy teacher is $(x_1, x_4, x_3, x_2)$ or $(x_1, x_4, x_3, x_5)$ or $(x_1, x_4, x_5, x_2)$ or $(x_1, x_4, x_5, x_3)$, of which the cost is $4$. 
   
 \end{itemize}
 We can see that for GBS learner alone, it only requires $3$ examples to identify the target hypothesis, whereas for GBS with an unconstrained teacher, it needs $4$ examples for identifying the target hypothesis. This example demonstrates that the greedy teacher is not always helpful for active learners, and sometimes it will even hurt the performance of the active learner.

 \end{proof}

\end{appendices}
 \end{document}